\documentclass[11pt]{article}
\usepackage[preprint]{acl}
\usepackage{times,latexsym}
\usepackage[T1]{fontenc}
\usepackage[utf8]{inputenc}
\usepackage{microtype,iftex}
\ifPDFTeX
  \usepackage{inconsolata}
  \pdfmapfile{+zi4.map}
\else
  
\fi
\usepackage{graphicx,booktabs,array,multirow,placeins}
\usepackage{algorithm,algpseudocode,cuted}
\newsavebox{\appendixsettingsbox}
\newlength{\appendixsettingsheight}
\newlength{\appendixsettingsrowextra}
\usepackage{amsmath,amssymb,mathtools,amsthm,xspace}
\mathtoolsset{showonlyrefs}

\newcommand{\MOPD}{\textsc{MOPD}\xspace}
\newcommand{\Dthree}{\textsc{D}$^3$-\textsc{MOPD}\xspace}
\newcommand{\NashMOPD}{\textsc{Nash-MOPD}\xspace}
\newlength{\MethodColWidth}
\newlength{\BenchmarkColWidth}
\newcolumntype{L}{>{\raggedright\arraybackslash}m{\MethodColWidth}}
\newcolumntype{B}{>{\centering\arraybackslash}m{\BenchmarkColWidth}}
\newcommand{\BenchmarkHeader}[2]{\shortstack{#1\\{\fontsize{8}{9}\selectfont #2}}}
\newcommand{\BenchmarkTableStyle}{\fontsize{9}{10.5}\selectfont
  \setlength{\tabcolsep}{1pt}\setlength{\MethodColWidth}{64pt}\setlength{\BenchmarkColWidth}{\dimexpr
    (\textwidth-\MethodColWidth-24\tabcolsep)/12\relax}\renewcommand{\arraystretch}{1.16}}
\newtheorem{theorem}{Theorem}[section]
\newtheorem{proposition}[theorem]{Proposition}
\newtheorem{lemma}[theorem]{Lemma}

\theoremstyle{definition}

\title{Nash-MOPD: Optimizing Task Trade-offs in\\Multi-Teacher On-Policy Distillation}
\author{
Siyuan Gan$^{1}$,
Bo Yang$^{1}$,
Linjian Meng$^{\dag 2}$,
Boyan Wang$^{\dag 1}$, \\
{\bf Xiaolong Qi$^{3}$, Jing Huo$^{1}$, Yang Gao$^{1}$} \\
$^1$State Key Laboratory of Novel Software Technology, Nanjing University, Nanjing, China \\
$^2$Shanghai Artificial Intelligence Laboratory, Shanghai, China\\
$^3$Yili Normal University, Yining, China\\
gansiyuan@smail.nju.edu.cn, menglinjian@pjlab.org.cn, boyanwang@nju.edu.cn\\
}
\makeatletter
\def\blfootnote#1{\begin{NoHyper}\xdef\@thefnmark{}\@footnotetext{#1}\end{NoHyper}}
\makeatother
\usepackage[skins,raster]{tcolorbox}
\usepackage{listings}
\hypersetup{pdftitle={Nash-MOPD: Optimizing Task Trade-offs in Multi-Teacher On-Policy Distillation},
  pdfauthor={Siyuan Gan, Bo Yang, Linjian Meng, Boyan Wang, Xiaolong Qi, Jing Huo, Yang Gao}}
\begin{document}
\maketitle
\begin{abstract}
Multi-teacher on-policy distillation (\MOPD) aggregates the capabilities of
multiple domain-expert teachers into a single student model.
However, all domains in MOPD share the student's finite budget of
optimization steps and rollouts.
Fixed uses constant domain sampling ratios based on corpus sizes;
by contrast, \Dthree adjusts the ratios using the teacher--student reverse-KL
divergence and its recent decrease.
However, favoring large recent decreases can direct more training toward
domains that already improve quickly, while domains that improve more
slowly may receive too few examples.
To address inflexible allocation and the risk of giving insufficient
training to domains that improve more slowly, we propose \NashMOPD,
an adaptive domain scheduler based on a local minimax game using existing
reverse-KL statistics.
The equilibrium maximizes the smallest predicted proportional reduction
under the local model and gives more prompts to domains that need more
examples for comparable improvement.
Consequently, \NashMOPD responds to changing learning progress and supports
domains that improve more slowly without extra teacher queries or rollouts.
At step 100, \NashMOPD exceeds both baselines' final mathematics and code
checkpoint scores on DeepSeek-R1-Distill-Qwen-1.5B and reaches Fixed's on
Qwen3-8B-Base. Furthermore, it leads eight of nine reported metrics and is
best or tied best on all six mathematics and code accuracy metrics,
respectively.
\end{abstract}
\blfootnote{$\dag$ Corresponding authors.}

\begin{figure*}[t]
\centering
\includegraphics[width=\textwidth]{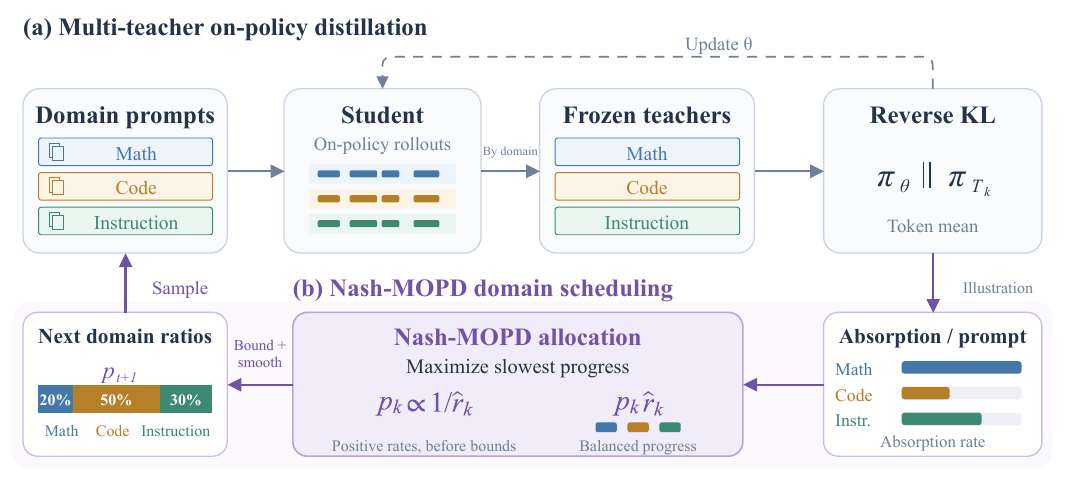}
\caption{Optimizing the multi-task trade-off with \NashMOPD. Frozen
domain-expert teachers supervise student rollouts; exposure-normalized
progress estimates guide prompt allocation across domains.}
\label{fig:method}
\vspace{-6pt}
\end{figure*}

\section{Introduction}
Multi-teacher on-policy distillation (\MOPD) aggregates the capabilities of
multiple domain-expert teachers into a single student model, which is widely
used in frontier model post-training
\citep{zeng2026glm5,deepseek2026v4,kimi2026k3,xiaomi2026mimov26}.
During training, the student generates rollouts from prompts sampled across
domains. The corresponding teacher then provides token-level supervision
at student-visited states, and the student is updated to minimize the reverse
KL divergence from its next-token distribution to the teacher's
\citep{ma2026mopd,meng2026unbiasedtopk}.
However, all domains in MOPD share the student's finite budget of
optimization steps and rollouts. As a result, increasing one domain's
sampling ratio leaves fewer training examples for the others.
MOPD therefore entails a multi-task trade-off, since the domain mixture
must balance competing learning demands to develop diverse capabilities
together. This allocation consequently shapes both convergence speed and
the final accuracy achievable within the training budget.

Effective use of this shared budget requires accounting for differences in
how the student learns across domains. As training proceeds, the student
may learn some domains quickly while requiring more examples to learn
others. Fixed uses constant domain sampling ratios based on corpus sizes,
so it cannot adjust the allocation to these differences or their changes
over time. By contrast, \Dthree adjusts the ratios using the teacher--student
reverse-KL divergence and its recent decrease \citep{sun2026d3mopd}.
However, favoring large recent decreases can direct more training toward
domains that already improve quickly, while domains that improve more
slowly may receive too few examples. We therefore need an adaptive
scheduler that responds to changes in the student's learning progress and
provides sufficient training for domains that improve more slowly, so as
to improve both convergence speed and final accuracy under the same
training budget.

To address inflexible allocation and the risk of giving insufficient
training to domains that improve more slowly, we propose \NashMOPD,
an adaptive domain scheduler based on a local minimax game.
At each allocation update, \NashMOPD uses existing reverse-KL statistics
to estimate how much of the remaining teacher--student difference each
domain reduces per sampled prompt. It then solves a local minimax game
in which the sampler chooses domain sampling ratios and the adversary
selects the domain with the smallest predicted proportional reduction.
The equilibrium maximizes this minimum progress under the local model
and gives more prompts to domains that need more examples for comparable
improvement. Sampling bounds preserve domain coverage, while smoothing
keeps successive allocations stable. Consequently, \NashMOPD responds
to changing learning progress and supports domains that improve more
slowly without extra teacher queries or rollouts. Moreover, this allocation
has a clear interpretation, since each domain's sampling ratio reflects
its estimated remaining training demand under the local model.

To evaluate the effectiveness of \NashMOPD, we conduct experiments
with DeepSeek-R1-Distill-Qwen-1.5B across mathematics, code, and
instruction following, and with Qwen3-8B-Base across mathematics and
code. All methods use the same training pipeline and budget.
On the DeepSeek-R1-Distill-Qwen-1.5B student, \NashMOPD exceeds both baselines' final mathematics
and code checkpoint scores at step 100. On the Qwen3-8B-Base student, it reaches
Fixed's final mathematics and code performance at step 100, using 60
and 80 fewer optimization steps than Fixed and the original \Dthree
baseline described in Section~\ref{sec:experiments}, respectively.
Furthermore, \NashMOPD leads eight of nine reported metrics on the
DeepSeek-R1-Distill-Qwen-1.5B student and is best or tied best on all six mathematics and code
accuracy metrics on the Qwen3-8B-Base student. These results demonstrate that
\NashMOPD outperforms both baselines in convergence speed and overall
final accuracy across domains.

\section{Related Work}
\paragraph{On-policy distillation and capability integration.}
Reverse-KL and generalized knowledge distillation supervise students on
their own responses \citep{gu2024minillm,agarwal2024gkd}. MOPD extends this
to multiple teachers, routing rollouts to the corresponding domain teacher
\citep{ma2026mopd}. We preserve its token-level supervision and adapt the
probability of choosing each prompt domain.

\paragraph{Adaptive mixtures and multitask optimization.}
GradNorm and related approaches adapt task weights from gradient magnitude,
loss evolution, or validation behavior \citep{chen2018gradnorm,
guo2018dynamic,sundararaman2021sampling}. DoReMi optimizes domain weights
using a proxy-model objective \citep{xie2023doremi}. \Dthree is the
closest setting: it repurposes per-domain reverse-KL to adjust the domain mixture \citep{sun2026d3mopd}. Its composite signal combines remaining gap and descent velocity;
our inverse-rate rule estimates exposure needed for completion. Our
original \Dthree baseline follows the published domain scheduling rule for allocating prompts across domains.

\paragraph{Games, fairness, and online feedback.}
Pareto multitask methods and Nash bargaining combine task gradients
\citep{sener2018mto,navon2022bargaining}. Nash learning from human feedback
uses a zero-sum preference game \citep{munos2024nash}. Here the adversary
selects a domain bottleneck and the learner controls domain sampling ratios, so the
players and payoff differ. GiGPO uses repeated states across existing
trajectories to obtain finer credit signals for agent training
\citep{feng2025gigpo}; our controller similarly consumes observations
already produced by training, but acts at the domain-mixture level. Neither
gradient bargaining nor trajectory credit assignment directly supplies the
exposure-based completion objective considered here.

\begin{algorithm*}[t]
\normalsize
\caption{\NashMOPD: online domain scheduling}
\label{alg:nash-controller}
\begin{algorithmic}[1]
\Require Domain pools $\{\mathcal D_k\}$, frozen teachers $\{\pi_{T_k}\}$, student $\pi_\theta$;
batch size $B$; $S$ updates
\Require Sampling bounds $(\ell,u)$; warm-up $w$; scheduling interval $h$;
smoothing $\alpha$; jitter $\gamma$
\State Initialize $p$ from usable-row proportions and $s_{\mathrm{last}}\gets0$
\State Initialize pool permutations/cursors, prompt exposures, reference KL scales, and EMA/gap histories
\For{$s=1,\ldots,S$}
\State Draw $\eta_k\sim\mathcal U[-\gamma,\gamma]$; set
    $\widetilde p_k\gets p_k(1+\eta_k)/\sum_j p_j(1+\eta_j)$
\State $b\gets\Call{LargestRemainder}{B\widetilde p}$; draw $b_k$ prompts per domain, reshuffling exhausted pools
\State Shuffle the batch; generate student rollouts and query $\pi_{T_k}$ at each domain-$k$ prefix
\State Compute the token-mean reverse-KL loss $\mathcal L_{\mathrm{OPD}}$; update exposures and KL statistics
\State Estimate remaining gaps $g_k$ and absorption rates $\widehat r_k$ using Eq.~\eqref{eq:gap}--\eqref{eq:rate}
\If{statistics are ready, $s\ge w$, and $s-s_{\mathrm{last}}\ge h$}
\State $\mathcal A\gets\{k:g_k>10^{-3},\ \widehat r_k>10^{-8},\ \widehat r_k\text{ is finite}\}$
\State Solve Eq.~\eqref{eq:bounded} for $p^*$ with the coverage fallbacks in Section~\ref{sec:bounded-sampling}
\State $p\gets\Pi_{\mathcal P}((1-\alpha)p+\alpha p^*)$; $s_{\mathrm{last}}\gets s$
    \EndIf
\State Update $\theta$ using $\mathcal L_{\mathrm{OPD}}$; save controller, sampler, and random states for recovery
\EndFor
\State \Return Student checkpoints and domain-scheduling traces
\end{algorithmic}
\end{algorithm*}

\section{Preliminaries}
\label{sec:prelim}
\subsection{MOPD Framework}
For each of $K$ domains, let $\mathcal D_k$ be its prompt distribution,
$\pi_{T_k}$ its frozen domain-expert teacher, and $\pi_\theta$ the shared
student. Domain sampling ratios $p\in\Delta_K$ satisfy $p_k\ge0$ and
$\sum_kp_k=1$. A batch of $B$ prompts assigns approximately $Bp_k$
examples to domain $k$; $e_k(t)$ counts cumulative prompt presentations, including repeated sampling.

\MOPD extends on-policy distillation to multiple teachers
\citep{ma2026mopd}. It samples $k\sim p$, $x\sim\mathcal D_k$, and a
student response $y\sim\pi_\theta(\cdot\mid x)$. At each student-visited
state $s_t=(x,y_{<t})$, the corresponding teacher supervises the student's
next-token distribution through reverse-KL
\citep{gu2024minillm,agarwal2024gkd}:
\begin{equation}
 d_{k,t}(\theta)
   =D_{\mathrm{KL}}\bigl(\pi_\theta(\cdot\mid s_t)
                        \|\pi_{T_k}(\cdot\mid s_t)\bigr).
\label{eq:mopd-prelim}
\end{equation}
For domain labels $k_i$ and non-padding response lengths $N_i$, the
batch objective is
\begin{equation}
 \mathcal L_{\mathrm{OPD}}(\theta)
   =\frac{\sum_{i=1}^B\sum_{t=1}^{N_i}d_{k_i,t}(\theta)}
          {\sum_{i=1}^B N_i}.
\label{eq:mopd-token-mean}
\end{equation}
Each divergence is evaluated on prompt $i$'s rollout, which is held
fixed during the update. Eq.~\eqref{eq:mopd-token-mean} averages
tokens; prompt shares need not equal token-loss shares.
Scheduling changes the prompt mixture, not teacher routing or loss weights.

\subsection{Domain Allocation Variants}
With $n_k$ usable training rows, Fixed retains
\begin{equation}
 p_k^{\mathrm{Fixed}}=\frac{n_k}{\sum_jn_j}.
\label{eq:fixed-mixture}
\end{equation}
These corpus-based ratios cannot respond to changing learning progress.

\Dthree adjusts ratios using normalized remaining reverse-KL
$\bar L_k$ and nonnegative descent velocity $v_k$, the recent relative
KL decrease \citep{sun2026d3mopd}. With temperature $T>0$, the original
method proposes
\begin{equation}
\begin{aligned}
 c_k&=\bar L_kv_k,\qquad z_k=c_k/\max_jc_j,\\
 \widetilde p_k^{\mathrm{D^3}}
   &=\frac{\exp(z_k/T)}{\sum_j\exp(z_j/T)}.
\end{aligned}
\label{eq:d3-mixture}
\end{equation}
If all $c_k=0$, the proposal is uniform; floors and caps
preserve coverage. This adapts to progress, but favoring large recent
decreases may undersample slowly improving domains.
We use the original \Dthree method; Appendix~\ref{app:comparator-details}
gives implementation details for its domain scheduling.

\section{Method}
\label{sec:method}
To address inflexible allocation and insufficient training for domains that
improve more slowly, \NashMOPD solves a local minimax game using existing
reverse-KL statistics. The equilibrium
gives more prompts to domains that need more examples for comparable
improvement. The controller maximizes the smallest predicted proportional
reduction to support bottleneck domains. Online updates adapt to
changing progress, while sampling ratios reflect each domain's
remaining training demand under the local model.
Bounds and smoothing preserve domain coverage and stable
allocations without extra teacher queries or rollouts.
Algorithm~\ref{alg:nash-controller} gives the complete training loop.

\subsection{Exposure-Normalized Progress Signals}
For each domain, the observed signal is mean per-token reverse-KL over
non-padding student response tokens. We clip negative empirical estimates
to zero and maintain an exponential moving average $L_k(t)$ with
coefficient $0.2$, initialized from the first valid observation.
The reference scale $L_k^{(0)}$ is the mean of the first five raw
nonnegative observations, floored at $10^{-8}$. With target ratio
$\rho=0.10$, the normalized remaining gap is
\begin{equation}
 g_k(t)=\left[
 \frac{L_k(t)-\rho L_k^{(0)}}{(1-\rho)L_k^{(0)}}
 \right]_+.
\label{eq:gap}
\end{equation}
This normalization accounts for differences in initial reverse-KL
scales. Gaps may exceed one when divergence increases; zero indicates
the KL target, not benchmark success. After the fifth observation, the
gap history starts at one. We retain up to 11 exposure--gap pairs and
use the oldest and newest records $[t_0,t_1]$, spanning up to ten observed
updates, to estimate
\begin{equation}
\begin{aligned}
 \widehat a_k(t)&=
 \frac{[g_k(t_0)-g_k(t_1)]_+}
      {\max(e_k(t_1)-e_k(t_0),1)},\\
 \widehat r_k(t)&=\frac{\widehat a_k(t)}
                         {\max(g_k(t_1),10^{-3})}.
\end{aligned}
\label{eq:rate}
\end{equation}
Here $\widehat a_k$ measures gap reduction per prompt, and
$\widehat r_k$ is the exposure-normalized absorption rate, or proportional
reduction per prompt. Dividing by exposure separates progress due to
receiving more prompts from progress per prompt; a zero estimate means
no positive decrease was measured in the window. These statistics are
collected from observations already available during ordinary distillation, without extra teacher queries
or rollouts.

\subsection{Local Minimax Allocation}
At one allocation update, fix positive gaps $g_k$ and finite positive
per-prompt reductions $a_k$; let $r_k=a_k/g_k$. The separable local model
predicts $g_k^+\approx g_k-Bp_ka_k$, with proportional reduction $Bp_kr_k$.
If these rates remained fixed, the predicted number of updates needed
to close every gap would be
\begin{equation}
 T_{\mathrm{loc}}(p)=\max_k\frac{g_k}{Bp_ka_k}
                   =\frac{1}{B\min_kp_kr_k}.
\label{eq:time}
\end{equation}
Minimizing Eq.~\eqref{eq:time} is thus equivalent to maximizing the
smallest predicted proportional reduction. The sampler chooses $p$,
while an adversary $q\in\Delta_K$ selects a bottleneck domain:
\begin{equation}
 u(p,q)=\sum_kq_kp_kr_k,
 \;\max_{p\in\Delta_K}\min_{q\in\Delta_K}u(p,q).
\label{eq:maximin}
\end{equation}
In Eq.~\eqref{eq:maximin}, minimizing over $q$ for fixed $p$
gives $\min_kp_kr_k$.
With no binding sampling bounds, the unique maximizing allocation is
\begin{equation}
 p_k^*=\frac{r_k^{-1}}{\sum_jr_j^{-1}}
       =\frac{g_k/a_k}{\sum_jg_j/a_j}.
\label{eq:equilibrium}
\end{equation}
To see why, an allocation with minimum payoff $m$ must satisfy
$p_k\ge m/r_k$. Summing yields $m\le(\sum_kr_k^{-1})^{-1}$;
Eq.~\eqref{eq:equilibrium} attains equality for every domain.
The equilibrium therefore assigns prompts in proportion to estimated
remaining training demand $g_k/a_k$. A smaller positive rate requires
more exposure to achieve comparable proportional improvement.
This result concerns a frozen-rate local model, not guaranteed
convergence of the nonlinear training trajectory. We re-estimate the
rates online; Appendix~\ref{app:theory} gives the saddle-point proof
and bounded-allocation analysis.

The adversary serves to identify the current bottleneck, rather than
introducing a trainable model. The controller computes the maximizing
allocation directly from domain statistics. Scheduling therefore fits
within the existing training loop, without alternating optimization
between players.

\subsection{Bounds, Smoothing, and Execution}
\label{sec:bounded-sampling}
Very small positive rates can produce extreme inverse-rate weights.
We constrain base sampling ratios to
$\mathcal P=\{p:\sum_kp_k=1,\ \ell\le p_k\le u\}$,
with $\ell=0.10$ and $u=0.70$. Feasibility requires $K\ell\le1\le Ku$.
Let $\mathcal A$ contain domains with $g_k>10^{-3}$ and finite
$\widehat r_k>10^{-8}$. Inactive domains initially receive the floor $\ell$.
On the active set, bisection selects a scalar $v$ such that
\begin{equation}
 \begin{aligned}
 p_k^*&=\operatorname{clip}(v/\widehat r_k,\ell,u),\\
 \sum_{k\in\mathcal A}p_k^*&=1-|\mathcal A^c|\ell.
 \end{aligned}
\label{eq:bounded}
\end{equation}
This maximizes the smallest active-domain payoff under the bounds;
it does not assign a positive modeled progress guarantee to zero-rate
domains. If active caps leave probability unallocated, active shares
are set to $u$ and the remainder goes to inactive domains. If none is
active, the target is uniform.
Training starts with Fixed's row-proportional ratios
(Eq.~\eqref{eq:fixed-mixture}). Observations
accumulate during a 20-update warm-up; scheduling starts only when all
domains have reference scales and at least two history records.
Every ten updates, we smooth the target with $\alpha=0.25$ and project
onto $\mathcal P$:
\begin{equation}
 p_t=\Pi_{\mathcal P}
       \big((1-\alpha)p_{t-1}+\alpha p_t^*\big).
\label{eq:smooth}
\end{equation}
The base mixture preserves domain coverage, while smoothing tempers
abrupt reallocations.
Each batch applies multiplicative jitter in $[0.7,1.3]$, renormalizes,
and uses largest-remainder rounding for $B$ prompt quotas. All schedules
share the permutation sampler. Bounds apply to post-warm-up base ratios;
jitter and rounding can change realized fractions. KL observations precede
the optimizer update, and revised ratios control subsequent batches.
Appendix~\ref{app:controller-state} gives sampler and checkpoint recovery details for resuming training.

\begin{table*}[t]
\centering
\BenchmarkTableStyle
\begin{tabular}{@{}L*{12}{B}@{}}
\toprule
\multicolumn{13}{c}{\textit{DeepSeek-R1-Distill-Qwen-1.5B}} \\
\midrule
 & \multicolumn{4}{c}{\textbf{Math}} & \multicolumn{4}{c}{\textbf{Code}} & \multicolumn{4}{c}{\textbf{Instruction following}} \\
\cmidrule(lr){2-5}\cmidrule(lr){6-9}\cmidrule(lr){10-13}
\multicolumn{1}{c}{Method} & \BenchmarkHeader{MATH}{avg@4} & \BenchmarkHeader{AIME}{avg@16} & \BenchmarkHeader{AMC}{avg@16} & Mean & \BenchmarkHeader{LCB}{avg@4} & \BenchmarkHeader{HE+}{avg@4} & \BenchmarkHeader{MB+}{avg@4} & Mean & \BenchmarkHeader{Multi-IF}{score} & \BenchmarkHeader{IF-L}{score} & \BenchmarkHeader{IF-S}{score} & Mean \\
\midrule
Initial student & 76.85 & 18.96 & 65.78 & 53.86 & 16.57 & 61.13 & 51.52 & 43.07 & 29.59 & 15.00 & 10.33 & 18.31 \\
\addlinespace[2pt]
Fixed & 83.05 & 26.04 & \textbf{79.69} & 62.93 & 21.57 & 68.14 & 56.68 & 48.80 & 40.25 & 16.67 & 12.33 & 23.08 \\
\Dthree & 83.25 & 26.04 & 79.06 & 62.78 & 19.29 & 66.31 & 57.01 & 47.54 & 41.54 & 18.00 & 13.67 & 24.40 \\
\NashMOPD & \textbf{85.95} & \textbf{27.50} & 78.91 & \textbf{64.12} & \textbf{23.29} & \textbf{68.75} & \textbf{57.47} & \textbf{49.84} & \textbf{43.01} & \textbf{20.00} & \textbf{14.67} & \textbf{25.89} \\
\bottomrule
\end{tabular}
\captionsetup{labelsep=period}
\caption{Benchmark scores in \%. \Dthree denotes the original method described in the experimental setup. MATH, AIME, AMC, and LCB denote MATH-500, AIME25, AMC23, and LiveCodeBench v6. HE+ and MB+ denote HumanEval+ and MBPP+. IF-L and IF-S denote IFBench loose and strict scores. \textit{Mean} is the unweighted mean of the three displayed metrics within each domain. Bold marks the best method, including ties.}
\label{tab:deepseek-results}
\end{table*}

\begin{table*}[t]
\centering
\BenchmarkTableStyle
\begin{tabular}{@{}L*{8}{B}@{}}
\toprule
\multicolumn{9}{c}{\textit{Qwen3-8B-Base}} \\
\midrule
 & \multicolumn{4}{c}{\textbf{Math}} & \multicolumn{4}{c}{\textbf{Code}} \\
\cmidrule(lr){2-5}\cmidrule(lr){6-9}
\multicolumn{1}{c}{Method} & \BenchmarkHeader{MATH}{avg@4} & \BenchmarkHeader{AIME}{avg@16} & \BenchmarkHeader{AMC}{avg@16} & Mean & \BenchmarkHeader{LCB}{avg@4} & \BenchmarkHeader{HE+}{avg@4} & \BenchmarkHeader{MB+}{avg@4} & Mean \\
\midrule
Initial student & 64.25 & 9.38 & 55.00 & 42.88 & 16.57 & 69.82 & 43.39 & 43.26 \\
\addlinespace[2pt]
Fixed & 86.70 & 22.71 & 78.28 & 62.56 & 26.86 & 83.08 & 60.19 & 56.71 \\
\Dthree & \textbf{87.40} & 21.67 & 79.84 & 62.97 & 27.43 & 82.62 & 61.97 & 57.34 \\
\NashMOPD & \textbf{87.40} & \textbf{23.13} & \textbf{80.16} & \textbf{63.56} & \textbf{28.57} & \textbf{83.23} & \textbf{62.37} & \textbf{58.06} \\
\bottomrule
\end{tabular}
\captionsetup{labelsep=period}
\caption{Benchmark scores in \%. MATH, AIME, AMC, and LCB denote MATH-500, AIME25, AMC23, and LiveCodeBench v6. HE+ and MB+ denote HumanEval+ and MBPP+. \textit{Mean} is the unweighted mean of the three displayed metrics within each domain. Bold marks the best method, including ties.}
\label{tab:qwen3-results}
\end{table*}

\paragraph{Measuring joint convergence.}
\label{sec:joint-convergence}
Controller feedback uses reverse-KL; joint target attainment uses
held-out domain scores $S_{k,t}$ and thresholds $\tau_k$.
For the measured checkpoint set $\mathcal C$, the first checkpoint meeting all domain targets is
\begin{equation}
 t_{\mathrm{first}}=\min\{t\in\mathcal C:
                    S_{k,t}\ge\tau_k\ \text{for all }k\}.
\label{eq:joint-crossing}
\end{equation}
The first sustained crossing additionally requires every later measured
checkpoint through the endpoint to satisfy all thresholds. We report
both alongside endpoint accuracy.

\section{Experiments}
\label{sec:experiments}

\subsection{Experimental Setup}
\paragraph{Models and training data.}
DeepSeek-R1-Distill-Qwen-1.5B \citealp{deepseek2025r1} trains on
DAPO mathematics \citealp{yu2025dapo},
TACO code \citealp{li2023taco}, and Multi-IF
conversations \citealp{he2024multiif}. Its teachers are
JustRL-DeepSeek-1.5B \citealp{he2025justrl},
DeepCoder-1.5B-Preview, and UltraData-IF-1.5B \citealp{fu2026oneexampleopd},
respectively. The Multi-IF training split excludes 256 held-out conversations.
Qwen3-8B-Base \citealp{yang2025qwen3} uses the same math
and code corpora. Its teachers are two Qwen3-8B-Base versions trained
via reinforcement learning on math and code, respectively.
This tests allocation across scales and domain counts.

\paragraph{Baselines and training steps.}
We compare Fixed, the original \Dthree baseline, and \NashMOPD through
step 200, evaluating checkpoints every 20 optimizer updates.
Final benchmark scores use the step-200 checkpoint.
Fixed retains row-proportional base ratios throughout training. \Dthree combines
normalized reverse-KL with descent velocity. Its original scheduling
procedure is specified in Appendix~\ref{app:comparator-details}.
\NashMOPD uses the controller in Section~\ref{sec:method}.
The baselines differ in allocation, not in the distillation objective.
Their common training configuration is listed in
Appendix~\ref{app:training-settings}.
The shared budget isolates allocation effects.

\paragraph{Benchmarks and metrics.}
Endpoint accuracy is avg@4 on MATH-500 \citealp{hendrycks2021math},
LiveCodeBench v6, or LCB, \citealp{jain2025livecodebench}, HumanEval+, and
MBPP+ \citealp{liu2023evalplus}, and avg@16 on AIME25 and AMC23.
avg@$k$ averages correctness over $k$ candidates per question, whereas
pass@$k$ measures the fraction of questions solved by at least one of
those candidates. We report both.
Instruction following uses Multi-IF's three-turn score and IFBench's
loose and strict prompt accuracy \citealp{pyatkin2025ifbench}.
Checkpoint curves use MATH-500 and LCB pass@1, with Multi-IF additionally
included in the three-domain experiment.
This separates average accuracy from candidate coverage.

\subsection{Endpoint Accuracy}
\label{sec:primary-results}
Tables~\ref{tab:deepseek-results} and~\ref{tab:qwen3-results} report
endpoint accuracy. \NashMOPD leads eight of nine individual benchmark
metrics on the DeepSeek-R1-Distill-Qwen-1.5B student and is best or tied best on all six on the Qwen3-8B-Base
student. Domain means are reported separately from these counts.

\paragraph{Three-domain experiment.}
On the DeepSeek-R1-Distill-Qwen-1.5B student, \NashMOPD scores 85.95\% on MATH-500 avg@4,
23.29\% on LCB avg@4, and 43.01\% on Multi-IF. These exceed Fixed by
2.90, 1.72, and 2.76 percentage points, and \Dthree by 2.70, 4.00, and
1.47 points, respectively. Fixed has the highest AMC23 avg@16 score
at 79.69\%, compared with 78.91\% for \NashMOPD.
Relative to the initial student's 76.85\% and the math teacher's 86.10\%
MATH-500 avg@4 scores, \NashMOPD achieves a gap recovery ratio of 98.4\%
\citealp{fu2026oneexampleopd}, relative to the 9.25-point initial
teacher--student performance gap, compared with 67.0\% for Fixed.
The remaining math gap is only 0.15 points, versus 3.05 points for Fixed at the final checkpoint.

\paragraph{Two-domain experiment.}
On the Qwen3-8B-Base student, \NashMOPD scores 87.40\% on MATH-500 avg@4 and
28.57\% on LCB avg@4, exceeding Fixed by 0.70 and 1.71 points.
It ties \Dthree on MATH-500 and exceeds its LCB score by 1.14 points.
Mathematics and code means reach 63.56\% and 58.06\%, exceeding Fixed
by 1.00 and 1.35 points, respectively. Both means also exceed
\Dthree's by 0.59 and 0.72 points, so the larger code improvement
is accompanied by a gain in mathematics on the larger base student.

\begin{figure*}[t]
\begin{minipage}{\textwidth}
\captionsetup{hypcap=true,type=table}
\begingroup
\centering
\BenchmarkTableStyle
\begin{tabular}{@{}L*{8}{B}@{}}
\toprule
 & \multicolumn{4}{c}{\textbf{Math}} & \multicolumn{4}{c}{\textbf{Code}} \\
\cmidrule(lr){2-5}\cmidrule(lr){6-9}
\multicolumn{1}{c}{Method} & \BenchmarkHeader{MATH}{pass@4} & \BenchmarkHeader{AIME}{pass@16} & \BenchmarkHeader{AMC}{pass@16} & Mean & \BenchmarkHeader{LCB}{pass@4} & \BenchmarkHeader{HE+}{pass@4} & \BenchmarkHeader{MB+}{pass@4} & Mean \\
\midrule
\multicolumn{9}{c}{\textit{DeepSeek-R1-Distill-Qwen-1.5B}} \\
\midrule
Initial student & 88.00 & 43.33 & 92.50 & 74.61 & 19.43 & 78.66 & 67.20 & 55.10 \\
\addlinespace[2pt]
Fixed & 92.80 & 36.67 & \textbf{95.00} & 74.82 & 24.00 & 81.10 & 67.72 & 57.61 \\
\Dthree & 91.40 & 50.00 & \textbf{95.00} & 78.80 & 21.71 & 80.49 & 70.11 & 57.44 \\
\NashMOPD & \textbf{93.60} & \textbf{53.33} & \textbf{95.00} & \textbf{80.64} & \textbf{25.14} & \textbf{82.32} & \textbf{70.37} & \textbf{59.28} \\
\midrule
\multicolumn{9}{c}{\textit{Qwen3-8B-Base}} \\
\midrule
Initial student & 86.60 & 33.33 & 87.50 & 69.14 & 19.43 & 87.20 & 75.40 & 60.68 \\
\addlinespace[2pt]
Fixed & 93.80 & 43.33 & 92.50 & 76.54 & 29.71 & 90.24 & 80.69 & 66.88 \\
\Dthree & \textbf{94.00} & 36.67 & 95.00 & 75.22 & 30.86 & \textbf{92.07} & 79.37 & 67.43 \\
\NashMOPD & \textbf{94.00} & \textbf{46.67} & \textbf{97.50} & \textbf{79.39} & \textbf{32.00} & 91.46 & \textbf{80.95} & \textbf{68.14} \\
\bottomrule
\end{tabular}
\captionsetup{labelsep=period}
\caption{pass@$k$ coverage in \%. Benchmark abbreviations follow
Tables~\ref{tab:deepseek-results} and~\ref{tab:qwen3-results}.
\textit{Mean} averages the three benchmarks within each domain.
Bold marks the best method within each student, including ties.}
\label{tab:coverage-results}

\endgroup
\end{minipage}
\par\vspace{4pt}
\begin{minipage}{\textwidth}
\centering
\captionsetup{hypcap=true,type=figure,labelsep=period}
\includegraphics[width=0.75\textwidth]{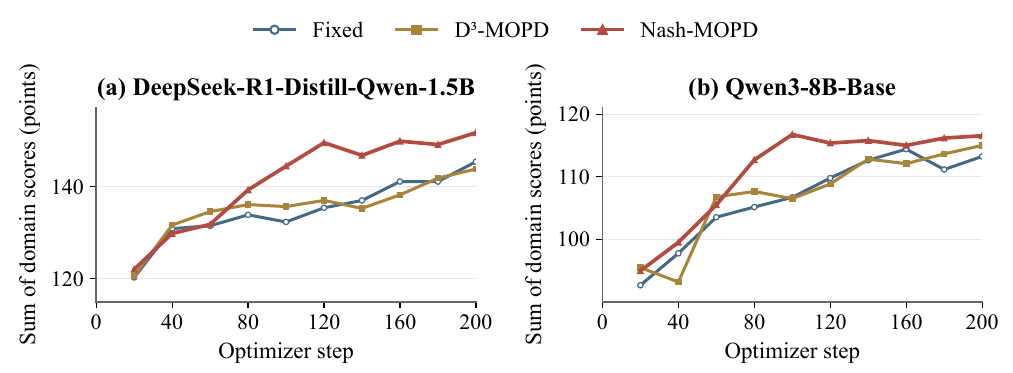}
\caption{Held-out domain-score totals for DeepSeek-R1-Distill-Qwen-1.5B
(left: math, code, and Multi-IF) and Qwen3-8B-Base (right: math and code).
Math/code use pass@1. Totals are compared within each student.}
\label{fig:score-sums}
\end{minipage}
\vspace{-12pt}
\end{figure*}

\begin{figure*}[t]
\centering
\captionsetup{labelsep=period}
\includegraphics[width=\textwidth]{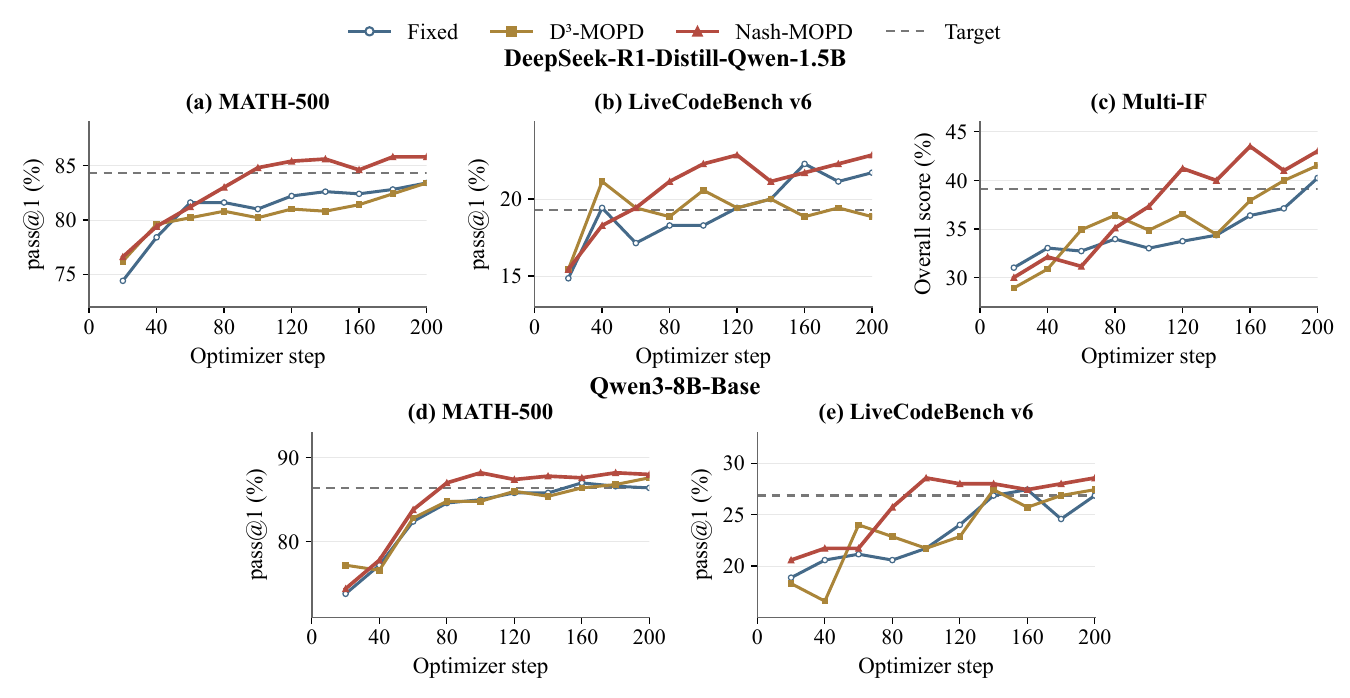}
\caption{Checkpoint scores for DeepSeek-R1-Distill-Qwen-1.5B (top)
and Qwen3-8B-Base (bottom). Dashed lines mark teacher-referenced
targets (top) and Fixed's final math/code scores (bottom).}
\label{fig:checkpoint-curves}
\label{fig:primary-curves}
\label{fig:qwen3-8b-base-curves}
\vspace{-12pt}
\end{figure*}

\subsection{Convergence Speed}
\label{sec:efficiency}
We measure convergence speed by the optimizer updates needed to match
baseline endpoint mathematics and code performance.
Figures~\ref{fig:score-sums} and~\ref{fig:checkpoint-curves} report domain-score totals
and checkpoint curves at 20-step intervals. Math and code use pass@1,
whereas the endpoint tables use multi-candidate average accuracy.

\paragraph{Three-domain experiment.}
On the DeepSeek-R1-Distill-Qwen-1.5B student, \NashMOPD's step-100
math/code scores of 84.80/22.29\% already exceed Fixed's step-200
scores of 83.40/21.71\% and \Dthree's 83.40/18.86\%.
It therefore surpasses both baselines' final mathematics and code
performance with half the training updates. It also reaches all three
teacher-referenced targets of 84.3/19.3/39.1\% on math/code/Multi-IF
at step 120, only 20 updates after exceeding the baseline math/code
endpoints, and sustains them at later measured checkpoints.
Neither baseline reaches all three targets within 200 optimization updates.

\paragraph{Two-domain experiment.}
On the Qwen3-8B-Base student, \NashMOPD reaches Fixed's final
math/code scores of 86.40/26.86\% at step 100, scoring 88.20/28.57\%, with
margins of 1.80 and 1.71 points above these thresholds.
Fixed and \Dthree first reach the same reference level at steps 160
and 180, respectively. \NashMOPD thus uses 60 and 80 fewer updates,
reductions of 37.5\% and 44.4\%, and remains above both thresholds
through step 200.
Figure~\ref{fig:score-sums} also shows that \NashMOPD has the highest
domain-score totals from step 100 onward for both students.

\paragraph{Training cost.}
Figure~\ref{fig:training-time} reports full 200-step training time.
Fixed, \Dthree, and \NashMOPD take 6.63/6.49/6.81 hours on the DeepSeek-R1-Distill-Qwen-1.5B
student and 12.97/12.97/13.21 hours on the Qwen3-8B-Base student, respectively.
These totals exceed Fixed's by 0.18 and 0.24 hours
or 2.7\% and 1.9\%, respectively.
We measure convergence speed in optimizer updates.
Wall-clock time to the earlier performance crossings was not measured.

\begin{figure}[!t]
\centering
\captionsetup{labelsep=period}
\includegraphics[width=\linewidth]{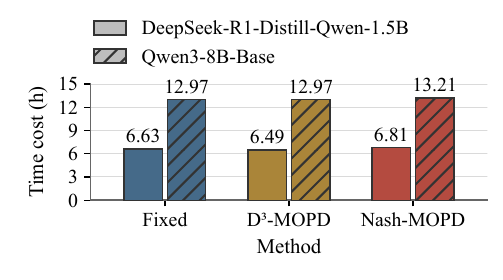}
\caption{200-step training time without held-out evaluation:
DeepSeek-R1-Distill-Qwen-1.5B (solid) and Qwen3-8B-Base (hatched).}
\label{fig:training-time}
\end{figure}

\subsection{Additional Benchmarks and Candidate Coverage}
\label{sec:generalization}
AIME25 and AMC23 use sixteen candidates per question. HumanEval+ and MBPP+
{\setcitestyle{citesep={,}}\citealp{chen2021evaluatingcode,austin2021programsynthesis,liu2023evalplus}}
use four. IFBench \citealp{pyatkin2025ifbench} reports loose and strict
accuracy from one response per prompt.

\paragraph{Additional benchmark accuracy.}
For the DeepSeek-R1-Distill-Qwen-1.5B student, \NashMOPD has the highest AIME25 avg@16 at 27.50\%,
HumanEval+ avg@4 at 68.75\%, and MBPP+ avg@4 at 57.47\%.
Its IFBench loose and strict scores are 20.00\% and 14.67\%, exceeding
Fixed by 3.33 and 2.34 points, roughly 20\% relative gains under both
scoring criteria. For the Qwen3-8B-Base student, \NashMOPD leads
AIME25 avg@16 at 23.13\%, AMC23 avg@16 at 80.16\%, HumanEval+ avg@4
at 83.23\%, and MBPP+ avg@4 at 62.37\%.

\paragraph{Coverage under multiple candidates.}
Table~\ref{tab:coverage-results}
reports pass@4 for MATH-500,
LiveCodeBench v6, HumanEval+, and MBPP+, and pass@16 for AIME25 and AMC23.
\NashMOPD is best or tied best on all six coverage metrics for the DeepSeek-R1-Distill-Qwen-1.5B student
and five of six for the Qwen3-8B-Base student.
On the DeepSeek-R1-Distill-Qwen-1.5B student, \NashMOPD's AIME25 coverage reaches 53.33\%, compared
with 36.67\% for Fixed, a gain of about 16.7 points.
MATH-500 and LCB coverage rise to 93.60\%
and 25.14\%. On the Qwen3-8B-Base student, \NashMOPD leads AIME25 at 46.67\%, AMC23
at 97.50\%, LCB at 32.00\%, and MBPP+ at 80.95\%, and ties the best
MATH-500 coverage at 94.00\%. On HumanEval+ for the Qwen3-8B-Base student, \Dthree has higher pass@4 coverage
than \NashMOPD at 92.07\% versus 91.46\%, despite lower avg@4 accuracy
at 82.62\% versus 83.23\%. Here, \NashMOPD's 0.61-point average-accuracy
gain coexists with a 0.61-point coverage shortfall, illustrating that
the two metrics can rank methods differently.

\FloatBarrier

\section{Conclusion}
\NashMOPD allocates prompts by maximizing the smallest predicted
absorption rate of the remaining teacher--student reverse-KL gaps. Under a separable, frozen-rate
progress model, the unconstrained equilibrium follows an inverse-rate
rule. Online rate estimates, bounded shares, and smoothing implement this
allocation during training.
The DeepSeek-R1-Distill-Qwen-1.5B student reaches and sustains all three teacher-referenced targets
from step 120, while the baselines remain short of the joint target through
step 200. The Qwen3-8B-Base student reaches Fixed's final math/code performance at
step 100, requiring 37.5\% and 44.4\% fewer updates than Fixed and
\Dthree for first attainment. At the endpoint, \NashMOPD leads eight of
nine metrics on the DeepSeek-R1-Distill-Qwen-1.5B student and is best or tied best on all
six metrics on the Qwen3-8B-Base student, with no extra teacher queries or rollouts.

\clearpage
\raggedbottom
\section*{Limitations}
\NashMOPD relies on local, separable progress estimates. Future work
could model cross-domain transfer and forgetting, quantify rate
uncertainty, and handle delayed feedback. Incorporating token counts
and training time could improve resource allocation. Learning sampling
bounds, update intervals, and smoothing could reduce manual tuning.
More domains, model sizes, and training budgets would test these extensions.

\clearpage
\flushbottom
\appendix
\section{Theory of the Allocation Game}
\label{app:theory}

\subsection{Local surrogate and the zero-sum game}
At one scheduler update, fix normalized gaps $g_k>0$ and estimated
per-prompt reductions $a_k>0$ for the active domains. A batch of $B$
prompts with allocation $p\in\Delta_K$ predicts gap reduction $Bp_ka_k$
and fractional progress $Bp_kr_k$, where $r_k=a_k/g_k$.
Ignoring truncation at the target, define the local time surrogate
\begin{equation}
  T_{\mathrm{loc}}(p)=\max_{k\in[K]}\frac{1}{B p_k r_k}
  =\frac{1}{B\min_k p_k r_k}.
  \label{eq:app-local-time}
\end{equation}
This is a local planning quantity, not a guarantee for nonlinear training:
student parameters, responses, losses, and rates can change after each update.

The sampler is the maximizing player and the adversary is a distribution
$q\in\Delta_K$ over domains. Their zero-sum payoff is
\begin{equation}
  u(p,q)=\sum_{k=1}^K q_k p_k r_k.
  \label{eq:app-payoff}
\end{equation}
For fixed $p$, the adversary places all mass on an index attaining
$\min_k p_k r_k$, so $\min_{q\in\Delta_K}u(p,q)=\min_kp_kr_k$. The sampler's
problem is the linear maximin program
\begin{equation}
  \nu^*=\max_{p\in\Delta_K}\min_{k\in[K]}p_k r_k.
  \label{eq:app-primal}
\end{equation}

\begin{theorem}[Interior maximin solution]
\label{thm:interior}
For a fixed active set and finite rates $r_k>0$, with no active lower or upper
share constraints, the unique maximizing allocation in
Eq.~\eqref{eq:app-primal} is
\begin{equation}
  p_k^*=\frac{r_k^{-1}}{\sum_{j=1}^K r_j^{-1}},\qquad
  \nu^*=\left(\sum_{j=1}^K r_j^{-1}\right)^{-1}.
  \label{eq:app-interior-solution}
\end{equation}
The associated worst-domain local time is
$T_{\mathrm{loc}}(p^*)=B^{-1}\sum_jr_j^{-1}$. The pair
$(p^*,q^*)$, where $q_k^*=p_k^*$, is a saddle point of
Eq.~\eqref{eq:app-payoff}.
\end{theorem}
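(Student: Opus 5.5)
The proof has four parts: an upper bound on the maximin value, attainment and uniqueness, the local time, and the saddle-point check.

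\emph{Upper bound.} Take any $p\in\Delta_K$ and write $m=\min_k p_kr_k$. Then $p_k\ge m/r_k$ for every $k$. Summing over $k$ and using $\sum_kp_k=1$ gives
\[
m\sum_j r_j^{-1}\le 1,\qquad\text{so}\qquad m\le\Bigl(\sum_j r_j^{-1}\Bigr)^{-1}.
\]
Hence $\nu^*\le(\sum_jr_j^{-1})^{-1}$.

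\emph{Attainment.} The allocation $p^*_k=r_k^{-1}/\sum_jr_j^{-1}$ is feasible, since all entries are positive because each $r_k>0$ is finite, and they sum to one. It satisfies $p^*_kr_k=(\sum_jr_j^{-1})^{-1}$ for every $k$, so it attains the bound. This gives the formula for $\nu^*$.

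\emph{Uniqueness.} Suppose $p$ also attains $m=\nu^*$. Then every inequality $p_k\ge \nu^*/r_k$ holds, and the two sides have the same sum: the left sums to $1$, and the right sums to $\nu^*\sum_k r_k^{-1}=1$. A family of nonnegative slacks $p_k-\nu^*/r_k$ that sums to zero must vanish termwise. So $p_k=\nu^*/r_k=p^*_k$ for all $k$, and the maximizer is unique.

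\emph{Local time.} Substituting into Eq.~\eqref{eq:app-local-time} gives
\[
T_{\mathrm{loc}}(p^*)=\frac{1}{B\nu^*}=B^{-1}\sum_jr_j^{-1}.
\]

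\emph{Saddle point.} I will verify the two inequalities $u(p,q^*)\le u(p^*,q^*)\le u(p^*,q)$ for all $p,q\in\Delta_K$ directly. Since $p^*_kr_k=\nu^*$ for every $k$, we have $u(p^*,q)=\nu^*\sum_kq_k=\nu^*$ for every $q$. So the right inequality holds with equality, and $u(p^*,q^*)=\nu^*$. For the left inequality, with $q^*=p^*$,
\[
u(p,q^*)=\sum_k p^*_k r_k\, p_k=\nu^*\sum_kp_k=\nu^*,
\]
because $p^*_kr_k=\nu^*$. Thus $q^*$ makes every sampler allocation give the same payoff, and the left inequality also holds with equality. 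This proves the saddle-point property.

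The main obstacle is the uniqueness step. It relies on the full simplex constraint together with the equality-summing argument, so I will state clearly that under the hypotheses no bound constraints are active. The saddle check itself is immediate once one notices that $q^*=p^*$ equalizes the sampler's payoffs, since $p^*_kr_k$ is constant in $k$.
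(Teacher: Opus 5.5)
Your proof is correct and follows the paper's argument essentially step for step. Summing the lower bounds $p_k\ge m/r_k$ bounds the maximin value, and $p^*$ attains it with equality in every coordinate. Forcing equality yields uniqueness; your nonnegative-slack argument just spells out the paper's one-line remark. The saddle point follows because $p^*_k r_k=q^*_k r_k=\nu^*$ for all $k$, so each player's payoff is constant against the other's equilibrium strategy.
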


\begin{proof}
Let $p$ be any feasible allocation and write
$m(p)=\min_kp_kr_k$. Then $p_k\ge m(p)/r_k$ for every $k$, so
\begin{equation}
\begin{aligned}
  1&=\sum_kp_k\ge m(p)\sum_k r_k^{-1},\\
  m(p)&\le\left(\sum_k r_k^{-1}\right)^{-1}.
\end{aligned}
  \label{eq:app-upper-bound}
\end{equation}
The proposed allocation sums to one and satisfies
$p_k^*r_k=(\sum_jr_j^{-1})^{-1}$ for every domain, attaining this upper
bound. Equality in Eq.~\eqref{eq:app-upper-bound} requires
$p_k=m(p)/r_k$ for every $k$, which proves uniqueness. Substituting
$\nu^*$ into Eq.~\eqref{eq:app-local-time} gives the stated local time.
For the saddle-point claim, $p_k^*r_k=\nu^*$ makes
$u(p^*,q)=\nu^*$ for every $q$. Also, since $q_k^*r_k=\nu^*$,
$u(p,q^*)=\nu^*$ for every $p$. Thus neither player can improve its
payoff by a unilateral change.
\end{proof}

The minimum payoff requires fractional progress in every domain.
Theorem~\ref{thm:interior} therefore also minimizes worst-domain local time
under frozen rates; the online controller re-estimates these rates
after subsequent updates.

\subsection{Bounded shares and water-filling}
The implementation imposes a floor $\ell$ and cap $u$ on domain sampling ratios. We
assume $K\ell\le1\le Ku$. Let $\mathcal{A}$ contain domains with a gap above
the target threshold and a positive finite estimated rate; let $\mathcal{I}$
be the other domains. The code fixes $p_i=\ell$ for $i\in\mathcal{I}$ and
maximizes the worst rate over $\mathcal{A}$ using the remaining mass
$b=1-|\mathcal{I}|\ell$. This convention is part of the controller: a
zero-rate domain is not assigned a modeled positive payoff, so its share is a
per-domain floor rather than a maximin guarantee.

For $|\mathcal{A}|>0$, define
\begin{equation}
  \phi(v)=\sum_{k\in\mathcal{A}}
     \operatorname{clip}\!\left(\frac{v}{r_k},\ell,u\right).
  \label{eq:app-waterfill}
\end{equation}
When $b\le |\mathcal{A}|u$, the bisection in the implementation finds a
level $v$ for which $\phi(v)=b$ (up to numerical tolerance), and sets the
positive-rate shares to the clipped values in Eq.~\eqref{eq:app-waterfill}.
If $b>|\mathcal{A}|u$, all positive-rate domains receive $u$ and the forced
remainder is shared among inactive domains. The following result states what
this construction does and, just as importantly, what its scalar $v$ does
not mean when a cap binds.

\begin{theorem}[Optimality of the bounded allocation]
\label{thm:bounded}
Suppose $|\mathcal{A}|>0$, $r_k>0$ for all $k\in\mathcal{A}$, and the
floor/cap constraints are feasible. If $b\le |\mathcal{A}|u$, any vector
$p_k=\operatorname{clip}(v/r_k,\ell,u)$ satisfying
$\sum_{k\in\mathcal{A}}p_k=b$ maximizes
$\min_{k\in\mathcal{A}}p_kr_k$ over
$\ell\le p_k\le u$ with total mass $b$. If $b>|\mathcal{A}|u$, setting
$p_k=u$ for all $k\in\mathcal{A}$ maximizes the same objective; any feasible
distribution of the remaining probability over $\mathcal{I}$ leaves that
active-domain optimum unchanged. The maximizing allocation need not be unique.
\end{theorem}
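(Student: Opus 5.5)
Both cases follow the upper-bound-and-attain template of Theorem~\ref{thm:interior}, adapted to the box constraints.

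\emph{Case $b\le|\mathcal A|u$.} Let $p^\circ_k=\operatorname{clip}(v/r_k,\ell,u)$ with $\sum_{k\in\mathcal A}p^\circ_k=b$, and put $m^\circ=\min_{k\in\mathcal A}p^\circ_kr_k$. Take any feasible $p$ (with $\ell\le p_k\le u$ and $\sum_{\mathcal A}p_k=b$) and suppose $m(p)=\min_kp_kr_k>m^\circ$. Choose $j$ attaining $m^\circ$. We argue by cases on where $p^\circ_j$ sits.

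If $p^\circ_j=u$, then $p_jr_j\le ur_j=m^\circ$, which is a contradiction.

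If $p^\circ_j=v/r_j\in[\ell,u]$, then $m^\circ=v$, so $m(p)>v$. Hence every $k$ satisfies $p_k>v/r_k$ and also $p_k\ge\ell$, giving $p_k\ge\max(v/r_k,\ell)\ge p^\circ_k$. The strict inequality $p_j>p^\circ_j$ holds at $j$. Summing gives $\sum p_k>b$, a contradiction.

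If $p^\circ_j=\ell>v/r_j$, then $m^\circ=\ell r_j$. But $p_j\ge\ell$ forces $p_jr_j\ge m^\circ$, and we need strict improvement. Here I would use the exchange argument: every $k$ has $p_kr_k>\ell r_j$, so $p_k>\ell r_j/r_k$. Since $\ell r_j>v$, this gives $p_k>v/r_k$ where clipping is not at the floor, and $p_k\ge\ell$ where it is. So again $p_k\ge p^\circ_k$, with strict inequality at $j$ because $p_jr_j>\ell r_j$. Summing gives the same contradiction.

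This three-way case analysis on the position of the minimizing coordinate is the main obstacle. I would first check that the minimizing index can be taken to be unclipped, or handle each case by the monotone comparison $p\ge p^\circ$ coordinatewise, which reduces every case to the mass constraint exactly as in Theorem~\ref{thm:interior}.

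\emph{Case $b>|\mathcal A|u$.} Setting $p_k=u$ maximizes each $p_kr_k$ individually, so it maximizes their minimum. The shares on $\mathcal I$ do not enter the objective, so redistributing them leaves the optimum unchanged. Feasibility of $K\ell\le1\le Ku$ ensures the remainder can be placed within the bounds on $\mathcal I$.

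\emph{Non-uniqueness.} A two-domain example suffices. When the minimum is attained by one domain, say one stuck at the floor, the other domain's share can be moved slightly without reducing the minimum whenever slack remains. Equivalently, in the second case the remainder on $\mathcal I$ is non-unique.
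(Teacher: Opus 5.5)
Your optimality argument is correct and follows essentially the paper's route. When the bottleneck of $p^\circ$ is a capped coordinate, $ur_j$ bounds every feasible allocation. Otherwise $m^\circ\ge v$, so a strict improvement forces $p_k\ge\max(v/r_k,\ell)\ge p^\circ_k$ for all $k$, strictly at the bottleneck, which overshoots the mass $b$. Organizing by the clipping status of the minimizer, with that single domination inequality, handles the paper's separate sub-cases ($v/r_k=u$, and all shares at the floor) a bit more cleanly. Note that your case 3 actually forces $p^\circ_k=\ell$ for every $k$: an interior coordinate would have payoff $v<\ell r_j$, and a capped one would have payoff $ur_k<v$. Your treatment of the case $b>|\mathcal A|u$ matches the paper.

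The part that fails is your justification of non-uniqueness. With two active domains and fixed mass $b$, the allocation is determined by $p_1$. The map $p_1\mapsto\min(p_1r_1,(b-p_1)r_2)$ is strictly increasing below $br_2/(r_1+r_2)$ and strictly decreasing above it, so its maximizer on any interval is unique. Moreover, as just noted, a floor-clipped bottleneck leaves no slack at all. Non-uniqueness of the active allocation needs a capped bottleneck plus at least two other active domains. For instance, take $\ell=0.1$, $u=0.7$, $\mathcal A=\{1,2,3\}$, $b=1$, $r_1=0.01$, $r_2=r_3=1$. The clipped solution is $(0.7,0.15,0.15)$ with value $0.007=ur_1$, and $(0.7,0.1,0.2)$ attains the same value.

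The freedom on $\mathcal I$ is a different phenomenon, not an equivalent one. There the active shares are forced to $u$, and the full allocation is non-unique only if $|\mathcal I|\ge2$ and $|\mathcal I|\ell<1-|\mathcal A|u<|\mathcal I|u$. The paper's proof does not argue this clause at all; it is only hinted at by the later remark that a capped domain can become the bottleneck. A concrete example like the one above is what closes it.
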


\begin{proof}
Write $m(p)=\min_{k\in\mathcal{A}}p_kr_k$. First consider a solution
of Eq.~\eqref{eq:app-waterfill}. If an upper-clipped domain has payoff
below $v$, choose $j$ with the smallest payoff $u r_j$ among such domains.
Every feasible allocation then has $m(p)\le u r_j$. The clipped vector
assigns $p_j=u$; every uncapped domain has payoff $v$, every floor-clipped domain
has payoff at least $v$, and every other upper-clipped domain has payoff at
least $u r_j$. Its minimum therefore equals this upper bound, so no
allocation can do better.
If no upper-clipped domain has payoff below $v$, all domain payoffs under the
clipped vector are at least $v$. Any allocation with minimum payoff strictly
above $v$ must assign more than $v/r_k$ to every domain with
$\ell\le v/r_k<u$, including a domain exactly at its floor. Domains with
$v/r_k<\ell$ can remain at their floor and already have payoff above $v$. If
some domain has $v/r_k=u$, strict improvement is impossible for that domain
because of its cap. Otherwise, when at least one domain needs a strict increase,
the required total exceeds the clipped vector's mass $b$. If all domains have
$v/r_k<\ell$, feasibility fixes every share at its floor. Thus no feasible
vector has a larger minimum payoff. Finally, when $b>|\mathcal{A}|u$, every active coordinate
is already at its maximum, so no feasible allocation can increase any
positive-rate domain's share; the minimum active payoff is therefore maximal.
The inactive residual is feasible because $K\ell\le1\le Ku$.
\end{proof}

The bounded objective has the same domain-adversary interpretation: for
fixed shares, minimizing the bilinear payoff over distributions on
$\mathcal A$ selects an active bottleneck. The implementation needs only
the maximizing allocation from Theorem~\ref{thm:bounded}; it does not
compute a dual distribution or alternate between game players.

The level $v$ equalizes payoffs only for interior coordinates. A capped
domain can have payoff $ur_k<v$ and become the bottleneck; plateaus of
$\phi$ can make $v$ non-unique. Hence $v$ should not be interpreted as
an empirical Nash value when constraints bind.

For a feasible active budget, the scalar search is well-defined:
$\phi$ is continuous and nondecreasing, with
$\phi(0)=|\mathcal A|\ell$ and
$\phi(u\max_{k\in\mathcal A}r_k)=|\mathcal A|u$.
The implementation uses 80 bisection iterations within this bracket.
Share clipping and a final projection handle finite precision; the root
level describes the bounded allocation, not a measured convergence rate.

\subsection{Sensitivity to estimated positive rates}
The controller uses window estimates $\widehat r_k$. The following result
quantifies how a specified error changes its local planning objective.
It compares the same active domain set and feasible share constraints;
changes in which domains pass the positive-rate test are addressed separately
in Section~\ref{app:zero-rates}.

\begin{proposition}[Deterministic rate sensitivity]
\label{thm:additive-error}
Let $\mathcal A$ be a fixed nonempty active set and let
$\mathcal P\subseteq\Delta_K$ be a nonempty compact feasible allocation set
with the controller's coverage convention held fixed. Suppose
$r_k,\widehat r_k>0$ and
$\max_{k\in\mathcal A}|r_k-\widehat r_k|\le\epsilon$.
If $\widehat p$ maximizes $\min_{k\in\mathcal A}p_k\widehat r_k$ on
$\mathcal P$, then
\begin{equation}
\begin{aligned}
 \min_{k\in\mathcal A}\widehat p_kr_k
 &\ge \nu_{\mathcal A}(r)-2\epsilon,\\
 \nu_{\mathcal A}(r)
 &=\max_{p\in\mathcal P}\min_{k\in\mathcal A}p_kr_k.
\end{aligned}
\label{eq:app-additive-error}
\end{equation}
\end{proposition}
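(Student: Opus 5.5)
The plan is a standard two-step perturbation argument: lose $\epsilon$ once when moving from the true rates to the estimates, and once when moving back. The only fact needed is that every share satisfies $0\le p_k\le 1$, since $\mathcal P\subseteq\Delta_K$.

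\textbf{Step 1: uniform perturbation of the objective.} Define $f(p;r)=\min_{k\in\mathcal A}p_kr_k$. For any $p\in\mathcal P$ and any $k\in\mathcal A$,
\[
|p_kr_k-p_k\widehat r_k| = p_k|r_k-\widehat r_k| \le \epsilon ,
\]
because $p_k\le 1$. The minimum over a finite index set is $1$-Lipschitz in the sup norm. Hence
\[
|f(p;r)-f(p;\widehat r)|\le\epsilon \quad\text{for every } p\in\mathcal P .
\]

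\textbf{Step 2: existence of comparison points.} Let $p^\circ$ attain $\nu_{\mathcal A}(r)$. Such a point exists because $\mathcal P$ is nonempty and compact and $f(\cdot;r)$ is continuous, being a minimum of finitely many linear functions. By hypothesis, $\widehat p$ maximizes $f(\cdot;\widehat r)$ on the same set $\mathcal P$.

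\textbf{Step 3: chain the inequalities.} Using Step 1, then the optimality of $\widehat p$ for the estimated rates, then Step 1 again:
\[
f(\widehat p;r)\;\ge\; f(\widehat p;\widehat r)-\epsilon \;\ge\; f(p^\circ;\widehat r)-\epsilon \;\ge\; f(p^\circ;r)-2\epsilon \;=\;\nu_{\mathcal A}(r)-2\epsilon .
\]
This is exactly Eq.~\eqref{eq:app-additive-error}.

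\textbf{Step 4: the only delicate point.} The argument requires that both maximizations run over the same feasible set and the same active set. That is why the statement holds $\mathcal A$ and the coverage convention fixed. Under that convention, the inactive floors (and any inactive residual) are identical constraints in $\mathcal P$ for both $r$ and $\widehat r$, so $p^\circ$ is a legitimate competitor for $\widehat p$. Positivity of the rates is not used beyond keeping $\mathcal A$ unchanged. The constant $2$ cannot be improved in general, because one $\epsilon$ is lost on each side of the comparison.
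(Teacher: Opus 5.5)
Your proof is correct and follows the paper's argument. The paper first bounds $|\nu_{\mathcal A}(r)-\nu_{\mathcal A}(\widehat r)|\le\epsilon$ and then loses one more $\epsilon$ at $\widehat p$; your chain does the same with the comparison point $p^\circ$ written out.

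One correction: your closing remark that the constant $2$ cannot be improved is false in this setting, because $\mathcal P\subseteq\Delta_K$. The remark is not needed for the proof, so drop it. To see why it fails, let $k^*$ attain $\min_{k\in\mathcal A}\widehat p_kr_k$ and $j^*$ attain $\min_{k\in\mathcal A}p^\circ_k\widehat r_k$. If $k^*=j^*=k$, your chain with these indices bounds the loss $\nu_{\mathcal A}(r)-\min_{k\in\mathcal A}\widehat p_kr_k$ by $(p^\circ_k-\widehat p_k)(r_k-\widehat r_k)\le\epsilon$. If $k^*\neq j^*$, the chain gives at most $s\epsilon$ with $s=p^\circ_{j^*}+\widehat p_{k^*}$. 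Separately, $\min_{k\in\mathcal A}p^\circ_kr_k\le p^\circ_{k^*}r_{k^*}\le(1-p^\circ_{j^*})r_{k^*}$ gives at most $(1-s)r_{k^*}$. Since $r_{k^*}>0$, $\min\{s\epsilon,(1-s)r_{k^*}\}\le\epsilon$, so the loss never exceeds $\epsilon$.
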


\begin{proof}
For every feasible $p$, $0\le p_k\le1$ bounds the change in each active
product by $\epsilon$. Taking the minimum preserves that bound, so
$|\min_{k\in\mathcal A}p_kr_k-
  \min_{k\in\mathcal A}p_k\widehat r_k|\le\epsilon$.
Maximizing over the same compact set also gives
$|\nu_{\mathcal A}(r)-\nu_{\mathcal A}(\widehat r)|\le\epsilon$.
The estimated optimizer therefore satisfies
\begin{equation*}
 \min_{k\in\mathcal A}\widehat p_kr_k
 \ge\nu_{\mathcal A}(\widehat r)-\epsilon
 \ge\nu_{\mathcal A}(r)-2\epsilon.
\end{equation*}
\end{proof}

This result applies to the bounded target before temporal smoothing. It is
conditional on a specified perturbation of the local rates. Our moving-window
EMA estimator is temporally dependent, and the experiments do not supply a
calibrated bound on $\epsilon$. Consequently the proposition explains
sensitivity of the planning objective without assigning a confidence level
or certifying progress for the reported training runs.

\subsection{Smoothing, batch-level jitter, and finite batches}
Let $q_t$ be the bounded target, $p_t$ the smoothed allocation, and
$\alpha\in(0,1]$ the smoothing coefficient. Convex combinations of feasible
shares remain feasible, so Euclidean projection is a numerical safeguard
after entry into the capped simplex. It also enforces bounds at the first
dynamic update, since the instruction-following warm-up share is below
$0.1$. The lemma assumes a feasible initial allocation.

\begin{lemma}[Smoothing contraction and tracking]
\label{lem:smoothing}
For a fixed target $q$, the update
$p_t=\Pi_{\mathcal{P}}((1-\alpha)p_{t-1}+\alpha q)$ satisfies
$\|p_t-q\|_2\le(1-\alpha)\|p_{t-1}-q\|_2$. If the targets vary with
$\|q_t-q_{t-1}\|_2\le\xi$ and the update uses $q_t$, then
\begin{equation}
  \|p_t-q_t\|_2\le(1-\alpha)\|p_{t-1}-q_{t-1}\|_2
                    +(1-\alpha)\xi.
  \label{eq:app-tracking}
\end{equation}
Consequently $\limsup_t\|p_t-q_t\|_2$ is at most
$(1-\alpha)\xi/\alpha$ when the stated variation bound holds.
\end{lemma}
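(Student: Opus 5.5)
The argument uses two facts about Euclidean projection onto the closed convex set $\mathcal P$: it is nonexpansive, $\|\Pi_{\mathcal P}(x)-\Pi_{\mathcal P}(y)\|_2\le\|x-y\|_2$, and it fixes every point of $\mathcal P$. First I will check that each target $q$ (respectively $q_t$) lies in $\mathcal P$. This holds because targets come from the bounded allocation of Theorem~\ref{thm:bounded} together with the coverage convention, so $q=\Pi_{\mathcal P}(q)$. I will state this membership explicitly as the standing assumption the lemma relies on.

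\textbf{Fixed target.} Write
\[
p_t-q=\Pi_{\mathcal P}\bigl((1-\alpha)p_{t-1}+\alpha q\bigr)-\Pi_{\mathcal P}(q).
\]
Nonexpansiveness then gives
\[
\|p_t-q\|_2\le\|(1-\alpha)p_{t-1}+\alpha q-q\|_2=(1-\alpha)\|p_{t-1}-q\|_2 .
\]
This step is immediate once $q\in\mathcal P$ is in hand.

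\textbf{Varying targets.} Apply the fixed-target bound with $q=q_t$:
\[
\|p_t-q_t\|_2\le(1-\alpha)\|p_{t-1}-q_t\|_2 .
\]
The triangle inequality gives
\[
\|p_{t-1}-q_t\|_2\le\|p_{t-1}-q_{t-1}\|_2+\|q_{t-1}-q_t\|_2\le\|p_{t-1}-q_{t-1}\|_2+\xi .
\]
Substituting the second bound into the first yields Eq.~\eqref{eq:app-tracking}.

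\textbf{Limit.} Set $d_t=\|p_t-q_t\|_2$, so that $d_t\le(1-\alpha)d_{t-1}+(1-\alpha)\xi$. Unrolling by induction gives
\[
d_t\le(1-\alpha)^t d_0+(1-\alpha)\xi\sum_{i=0}^{t-1}(1-\alpha)^i\le(1-\alpha)^t d_0+\frac{(1-\alpha)\xi}{\alpha}.
\]
Equivalently, one can take limsups of both sides, which gives $D\le(1-\alpha)D+(1-\alpha)\xi$ for $D=\limsup_t d_t$; this is finite because $\mathcal P$ is bounded and so $d_t\le\sqrt2$. For $\alpha\in(0,1]$ we have $(1-\alpha)^t d_0\to0$, since $d_0$ is finite. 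If $\alpha=1$ the bound is trivially zero.

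\textbf{Main obstacle.} The only delicate point is justifying $q_t\in\mathcal P$, which is what licenses $\Pi_{\mathcal P}(q_t)=q_t$. I would handle it by noting that the bounded target already satisfies $\ell\le q_k\le u$ and sums to one. That holds on active coordinates by the clipping, and on inactive ones via the floor or remainder assignment, using $K\ell\le1\le Ku$. The infeasible warm-up share does not matter, because the lemma assumes a feasible start, and the bounds above do not in fact need $p_0\in\mathcal P$.
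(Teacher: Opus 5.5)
Your proposal is correct and follows essentially the same route as the paper: nonexpansiveness of $\Pi_{\mathcal P}$ together with $\Pi_{\mathcal P}(q)=q$ gives the contraction, the triangle inequality gives the tracking recursion, and unrolling the scalar recursion gives the $\limsup$ bound. You also add two points the paper leaves implicit: you check that $q_t\in\mathcal P$ for the bounded target, and you observe that the argument never uses $p_0\in\mathcal P$.
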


\begin{proof}
Euclidean projection onto a closed convex set is non-expansive and fixes
$q$. Therefore
$\|p_t-q\|_2\le\|(1-\alpha)p_{t-1}+\alpha q-q\|_2$,
which gives the contraction. For a varying target, compare the projected
update with $q_t$ and apply the triangle inequality:
\begin{align*}
\|p_t-q_t\|_2
&\le(1-\alpha)\|p_{t-1}-q_t\|_2\\
&\le(1-\alpha)\|p_{t-1}-q_{t-1}\|_2\\
&\qquad +(1-\alpha)\xi.
\end{align*}
Iterating this scalar recursion gives a geometric transient plus a geometric
sum bounded by $(1-\alpha)\xi/\alpha$.
\end{proof}

The lemma concerns the tracking of target shares. It supplies no contraction
of the student's KL gap or benchmark error. The runs use $\alpha=0.25$;
we do not estimate a stationary target-variation bound from their traces.

For one batch, the implementation multiplies each share $p_k$ by
$1+\eta_k$, with $|\eta_k|\le\gamma$, and renormalizes. Let $\widetilde p$
be this jittered distribution. Since
$1-\gamma\le\sum_jp_j(1+\eta_j)\le1+\gamma$,
\begin{equation}
  \frac{1-\gamma}{1+\gamma}p_k\le\widetilde p_k\le
  \frac{1+\gamma}{1-\gamma}p_k.
  \label{eq:app-jitter}
\end{equation}
For a batch of $B$ prompts, largest-remainder apportionment first assigns
$\lfloor B\widetilde p_k\rfloor$ examples to each domain and then distributes
the remaining prompts according to fractional remainders. If $n_k$ is the
resulting integer quota, then
\begin{equation}
  \sum_kn_k=B,\qquad
  \left|\frac{n_k}{B}-\widetilde p_k\right|<\frac1B.
  \label{eq:app-rounding}
\end{equation}
The bound is deterministic and concerns batch composition only. It does not
bound gradient noise or model quality. The random perturbations are sampled
independently per batch. The sampler is shared by Fixed, \Dthree, and \NashMOPD
in both the DeepSeek-R1-Distill-Qwen-1.5B and Qwen3-8B-Base experiments.

\subsection{Zero-rate domains and the scope of the guarantee}
\label{app:zero-rates}
The bounded solver marks domain $k$ as progressing only if
$g_k>10^{-3}$, $\widehat r_k>10^{-8}$, and its rate is finite. A domain outside
this set receives its per-domain floor when a feasible positive-rate allocation
can consume the remaining probability. If upper bounds prevent that allocation
from consuming all mass, the residual is split equally over the other domains.
When no domain passes the progressing-domain test, the target is uniform. These
rules make every sampler update defined without dividing by zero.

They also change the scope of the optimization. If an unfinished domain has
$r_k=0$, then $\min_k p_kr_k=0$ for every feasible $p$ in the full-domain game,
and its modeled completion time is infinite. No allocation can certify
positive all-domain progress from those rates. The controller instead optimizes
on the positive-rate subset while retaining exposure to the other domains. A
zero estimate can represent a noisy window, a plateau, interference, or an
inadequate teacher; the observations do not identify which explanation holds.
In particular, allocating the minimum share to a zero-rate domain should not be
interpreted as the best recovery policy for a stalled domain. 

The diagonal local model additionally assumes that a domain's improvement is
attributable to its own prompt exposure. A shared student can improve one domain
from another domain's data or suffer negative transfer. The present analysis does not
model those interactions, prove monotone KL descent, or establish a
connection from KL descent to a benchmark threshold.
The theoretical statements therefore concern allocation under specified local
rates, whereas the results test whether that controller is useful in the
observed nonlinear training system.

\clearpage

\twocolumn[{
\begin{minipage}{\textwidth}
\nolinenumbers
\section{Implementation Details}
\label{app:implementation}
\label{app:eval-details}
This appendix complements Algorithm~\ref{alg:nash-controller} with the
controller state, comparator settings, and shared training settings.

\par\vspace{6pt}
\begin{minipage}{\textwidth}
\makeatletter\def\@captype{table}\makeatother
\captionsetup{hypcap=false}
\nolinenumbers
\centering\small
\setlength{\tabcolsep}{5pt}
\begingroup
\newcommand{\AppendixSettingsTable}{\begin{tabular*}{\linewidth}{@{\extracolsep{\fill}}lll@{}}
\toprule
Setting & Value & Applied to \\
\midrule
Global prompt batch / responses per prompt & 64 / 1 & All training runs \\
Maximum prompt / response tokens & 4,096 / 7,168 & All training runs \\
Student optimizer / learning rate / weight decay & AdamW / $10^{-6}$ / 0.01 & All training runs \\
Training rollout / teacher temperature & 1.0 / 1.0 & All training runs \\
Loss aggregation / teacher precision & Token mean / bfloat16 & All training runs \\
KL scale observations / EMA coefficient & 5 / 0.2 & Domain-sampler controllers \\
KL target ratio / remaining-gap floor & 0.10 / $10^{-3}$ & \NashMOPD signals \\
Positive-rate cutoff & $10^{-8}$ & \NashMOPD allocation \\
Per-domain floor / cap & 0.10 / 0.70 & Dynamic post-warm-up mixtures \\
Warm-up / update interval & 20 / 10 steps & Dynamic schedules \\
Rate-history window & 10 observations & \NashMOPD rate estimation \\
Descent-velocity windows / KL denominator floor & Up to $3\times10$ / 0.15 & \Dthree comparator \\
Softmax temperature / remaining-gap cap & 0.5 / 1.5 & \Dthree comparator \\
Mixture smoothing & 0.25 & \NashMOPD only \\
Batch-level jitter & Uniform $[-0.3,0.3]$ & Domain-sampler conditions \\
Scalar allocation / projection iterations & 80 / 80 & Bounded solver \\
Primary evaluation interval & 20 optimizer steps & Main scheduler comparisons \\
Primary evaluation temperature / top-$p$ & 0.6 / 0.95 & Shared held-out domains \\
\bottomrule
\end{tabular*}}
\setlength{\appendixsettingsheight}{0.6666667\linewidth}
\sbox{\appendixsettingsbox}{\AppendixSettingsTable}
\setlength{\appendixsettingsrowextra}{\dimexpr
  \appendixsettingsheight-\ht\appendixsettingsbox-\dp\appendixsettingsbox\relax}
\divide\appendixsettingsrowextra by 19
\addtolength{\extrarowheight}{\appendixsettingsrowextra}
\sbox{\appendixsettingsbox}{\AppendixSettingsTable}
\setlength{\appendixsettingsrowextra}{\dimexpr
  \appendixsettingsheight-\ht\appendixsettingsbox-\dp\appendixsettingsbox\relax}
\divide\appendixsettingsrowextra by 19
\addtolength{\extrarowheight}{\appendixsettingsrowextra}
\sbox{\appendixsettingsbox}{\AppendixSettingsTable}
\typeout{Appendix settings table: width=\the\wd\appendixsettingsbox;
  height=\the\dimexpr\ht\appendixsettingsbox+\dp\appendixsettingsbox\relax}
\usebox{\appendixsettingsbox}
\endgroup
\caption{Training and scheduling settings for the reported studies.}
\label{tab:implementation-hyperparameters}
\end{minipage}

\end{minipage}
}]

\subsection{State, observations, and update order}
\label{app:controller-state}
The controller stores a domain-specific random row permutation, cursor, prompt
exposure count, initial KL scale, EMA, and recent histories. Each domain's raw
signal is recovered from the dense token-level distillation reward and aggregated
over its non-padding response tokens. Negative raw estimates are clipped to
zero before entering the EMA. The first valid observation initializes the
EMA; later observations use coefficient 0.2. The initial scale is the mean of
the first five raw observations, floored at $10^{-8}$. At the fifth observation,
the gap history is initialized at one; subsequent records use the normalized
remaining gap from the main text. \NashMOPD has no upper clipping of this gap.

A rolling history retains up to 11 exposure--gap pairs. Its oldest and newest
entries provide a gap decrement over up to 10 observed updates. The
nonnegative decrement divided by the exposure difference yields the estimated
exposure-normalized descent velocity, and division by
$\max(g_k,10^{-3})$ gives the exposure-normalized absorption rate. The exposure denominator is floored at one. Rates and gap observations
are updated throughout warm-up, but sampling shares change only once every
domain has a scale and at least two history points. Dynamic updates begin at
step 20 and occur at 10-step intervals. The logged post-update mixture
controls subsequent batches; the current batch has already been sampled.
The teacher signal in row $s$ is measured before optimizer update $s$.

The sampler uses one random-number generator for row permutations,
batch-level jitter, and within-batch ordering. It consumes each domain's current
row permutation before reshuffling that pool. Thus it permits repeated
presentations across training while avoiding independent sampling with
replacement at every draw within a cycle. Exposure counts measure prompt
presentations, including repeats, rather than distinct training examples.
Resume checkpoints retain the sampler's random-number state, current
permutations, cursors, yielded count, histories, EMA, scales, and last update.
Resuming a trainer segment continues this state instead of restarting warm-up.

\subsection{Bounded allocation and comparator details}
\label{app:comparator-details}
\NashMOPD solves for a clipped inverse-rate target by 80 bisection iterations over
$[0,u\max_k\widehat r_k]$, then normalizes its finite-precision result.
It blends this target with the previous mixture using $\alpha=0.25$ and
projects the result onto the capped simplex. The projection itself uses 80
bisection iterations. Share floors and caps apply to the controller's base
mixture after dynamic updates; the jittered probabilities and rounded batch
fractions can exceed those nominal coordinate bounds. The
DeepSeek-R1-Distill-Qwen-1.5B row-proportional warm-up also has an instruction-following base
share below 0.1.

Each batch applies independent multiplicative jitter in $[0.7,1.3]$ to
the scheduled shares and renormalizes them. Largest-remainder rounding
converts the resulting probabilities into integer quotas summing to the
global prompt batch size $B$. Fixed and dynamic schedules use the same
batch construction and random-permutation sampler.

The \Dthree comparator maintains an independent smoothed-KL history of up
to 41 observations. Its remaining-gap signal is
$\bar L_k=\operatorname{clip}(L_k/L_k^{(0)},0,1.5)$. The EMA and
reference scale follow the initialization above. Descent velocity $v_k$ averages the
available 10-step differences from at most three windows, with previous KL
floored at 0.15, and truncates negative descent to zero. Its composite signal multiplies the
remaining-gap and descent-velocity signals, divides by their maximum,
uses softmax temperature 0.5, adds the probability floor, and applies the
same cap. If all signals vanish, it proposes a uniform target. It uses no
additional temporal blend. This baseline uses the original \Dthree
scheduling method, combining remaining gap and descent velocity to
determine the domain sampling probabilities.
Eq.~\eqref{eq:d3-mixture} describes the proposal before the sampling
floor and cap are applied. Both adaptive schedules retain the row-proportional
base mixture during the shared warm-up and use the same batch-level
jitter and integer-quota sampler; only \NashMOPD uses the temporal blend
in Eq.~\eqref{eq:smooth}.

Fixed keeps the usable-row proportions throughout training. In the
DeepSeek-R1-Distill-Qwen-1.5B experiment these proportions are approximately
38.2\% math, 52.7\% code, and 9.1\% instruction following, computed from 17,917, 24,699, and
4,245 loaded rows. Qwen3-8B-Base removes instruction following and recomputes row proportions.

\subsection{Shared training and decoding settings}
\label{app:training-settings}
Schedules within each experiment share initialization, teachers, data,
AdamW settings, and a global batch of 64 with one student rollout per
prompt. Training uses learning rate $10^{-6}$, weight decay 0.01, up to
4,096 prompt tokens and 7,168 response tokens, on A100-SXM4-80GB GPUs.
Held-out decoding uses temperature 0.6 and top-$p$ 0.95, except for
IFBench.
Table~\ref{tab:implementation-hyperparameters} collects the training and
scheduling settings.

\clearpage

\nolinenumbers
\onecolumn
\section{Supplementary Results and Mechanism Analysis}
\label{app:qwen3-supplementary-results}
\label{app:mechanism-protocol}

Figure~\ref{fig:allocation-and-gaps} compares training of the student model
DeepSeek-R1-Distill-Qwen-1.5B over 200 optimizer steps, each representing
one parameter update. The three domains are mathematics, code generation,
and instruction following, which respectively involve solving mathematical
problems, producing programs, and satisfying prompt requirements.
Fixed samples domains in proportion to their training corpus sizes;
\Dthree adjusts sampling using remaining teacher--student divergence and
its recent decrease; \NashMOPD balances predicted proportional gap
reductions across domains. A domain sampling ratio is the scheduled
probability of selecting a prompt from that domain. Reverse
Kullback--Leibler divergence, abbreviated KL, measures the discrepancy
between the student's and the domain teacher's next-token distributions
on student-generated responses. The remaining KL gap is the nonnegative
difference between the smoothed divergence and 10\% of its initial scale,
divided by 90\% of that scale. Smoothing assigns weight 0.2 to the latest
observation and 0.8 to the previous smoothed value; the initial scale is
the mean of the first five nonnegative observations. A gap of one
represents the initial scale, and zero represents the target divergence.

\par\vspace{6pt}
\noindent\begin{minipage}{\textwidth}
\centering
\includegraphics[width=0.66\linewidth]{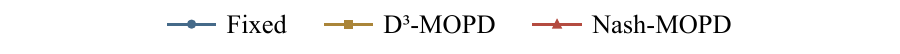}\par
\begin{minipage}[t]{0.33\linewidth}
\centering
\includegraphics[width=\linewidth]{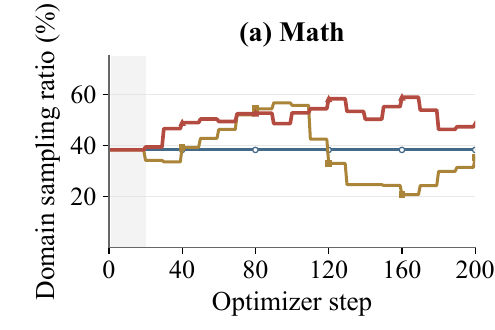}
\end{minipage}\hfill\begin{minipage}[t]{0.33\linewidth}
\centering
\includegraphics[width=\linewidth]{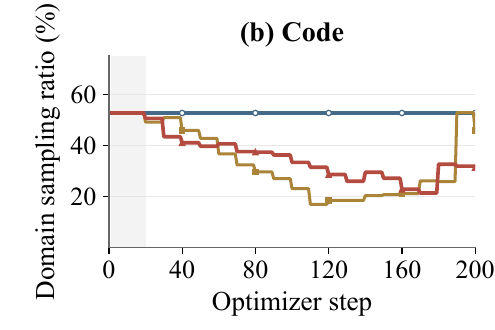}
\end{minipage}\hfill\begin{minipage}[t]{0.33\linewidth}
\centering
\includegraphics[width=\linewidth]{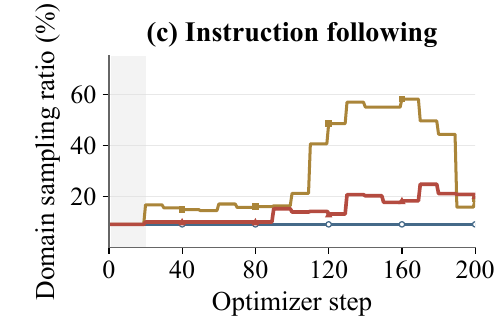}
\end{minipage}\par\vspace{4pt}
\begin{minipage}[t]{0.33\linewidth}
\centering
\includegraphics[width=\linewidth]{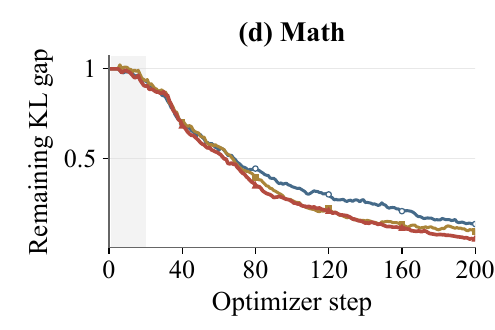}
\end{minipage}\hfill\begin{minipage}[t]{0.33\linewidth}
\centering
\includegraphics[width=\linewidth]{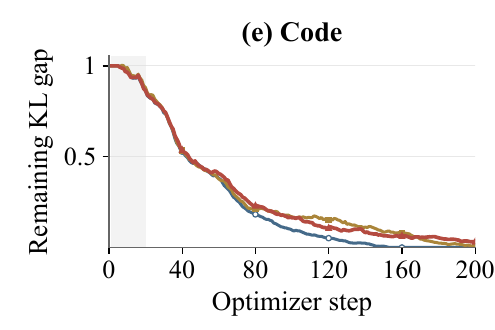}
\end{minipage}\hfill\begin{minipage}[t]{0.33\linewidth}
\centering
\includegraphics[width=\linewidth]{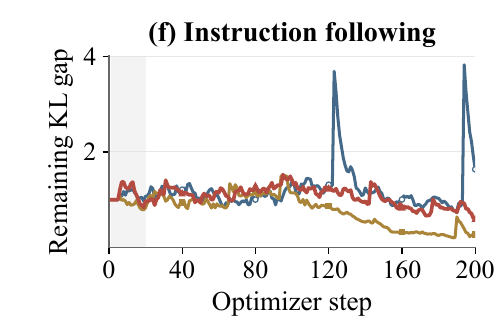}
\end{minipage}
\captionsetup{type=figure,hypcap=false}
\caption{Domain sampling ratios and remaining KL gaps for
 DeepSeek-R1-Distill-Qwen-1.5B through optimizer step 200 under Fixed,
 \Dthree, and \NashMOPD. Columns show mathematics, code generation,
 and instruction following; the upper row shows sampling ratios and
 the lower row shows remaining gaps. Gray shading marks steps 1--20.}
\label{fig:allocation-and-gaps}
\end{minipage}
\par\vspace{6pt}

First, the upper row shows how the methods distribute training prompts.
Fixed maintains ratios of 38.2\%, 52.7\%, and 9.1\% for mathematics,
code generation, and instruction following. In contrast, \NashMOPD
increases the mathematics ratio to 58.3\% by step 120 while reducing
the code generation ratio to 28.6\%. \Dthree makes a larger shift
toward instruction following after step 100, assigning it 58.2\%
at step 160, compared with 18.3\% under \NashMOPD.
Thus, \NashMOPD sustains a larger mathematics allocation, whereas
\Dthree concentrates more prompts on instruction following during
the middle and later stages.

Correspondingly, the lower row shows distinct patterns of gap reduction.
At step 200, the mathematics gap is 0.051 under \NashMOPD,
compared with 0.091 under \Dthree and 0.131 under Fixed.
Code generation shows the reverse ordering: Fixed reaches zero,
while \Dthree and \NashMOPD finish at 0.008 and 0.034.
Meanwhile, the instruction-following gap fluctuates around one during
early training and declines more strongly under \Dthree after its
sampling ratio increases. Its final gap is 0.277 under \Dthree,
0.605 under \NashMOPD, and 1.636 under Fixed. These trajectories
show the domain trade-offs directly: \NashMOPD achieves the smallest
mathematics gap, Fixed the smallest code generation gap, and
\Dthree the smallest instruction-following gap.

\clearpage
Next, Figure~\ref{fig:prediction-diagnostics-macro} compares predictions
of gap reduction over 18 ten-step windows starting at steps
20, 30, $\ldots$, 190. For domain $k$ and starting step $s$,
let $\widehat a_{k,s}$ be the estimated gap decrease per prompt,
computed by dividing the nonnegative gap decrease over up to ten observed
updates by the number of prompt presentations during those updates.
Let $\Delta e_{k,s}$ count prompt presentations over the next ten steps,
including repeated presentations. Holding the starting estimate fixed
gives the predicted reduction
\begin{equation}
 \widehat{\Delta g}_{k,s}=\widehat a_{k,s}\,\Delta e_{k,s}.
\label{eq:app-prediction-check}
\end{equation}
The observed reduction $\Delta g_{k,s}$ is the starting gap minus the
ending gap, so a negative value indicates an increase. Task-macro
correlation is the equal-weight mean of the three domain-specific
Spearman rank correlations between predicted and observed reductions
over the 18 windows. Spearman correlation measures agreement between
their rankings, with larger values indicating closer agreement.
Mean absolute error is the mean of
$|\widehat{\Delta g}_{k,s}-\Delta g_{k,s}|$ over all three domains
and 18 windows; smaller values indicate closer predicted magnitudes.

\par\vspace{6pt}
\noindent\begin{minipage}{\textwidth}
\centering
\includegraphics[width=\linewidth]{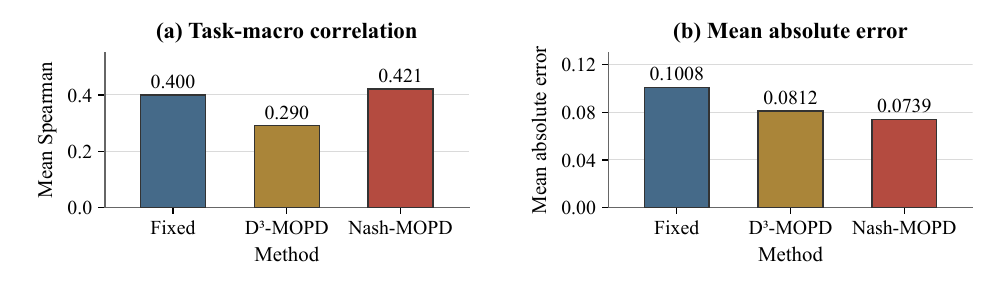}\par
\captionsetup{type=figure,hypcap=false}
\caption{Gap-reduction prediction metrics for DeepSeek-R1-Distill-Qwen-1.5B
 under Fixed, \Dthree, and \NashMOPD across 18 ten-step windows
 from step 20 to step 200. The left panel shows task-macro correlation;
 the right panel shows mean absolute error.}
\label{fig:prediction-diagnostics-macro}
\end{minipage}
\par\vspace{6pt}

Finally, both panels place \NashMOPD ahead of the other methods.
\NashMOPD has the highest task-macro correlation at 0.421,
followed by Fixed at 0.400 and \Dthree at 0.290.
The corresponding correlation advantages are 0.021 and 0.130.
\NashMOPD also has the lowest mean absolute error at 0.0739,
compared with 0.1008 for Fixed and 0.0812 for \Dthree,
reductions of 26.6\% and 8.9\%, respectively.
Together, the panels show that \NashMOPD's gap-reduction predictions
achieve both closer agreement in ranking and smaller magnitude errors
when averaged across the three domains.

\FloatBarrier
\clearpage
\onecolumn
\section{Case Studies}
\label{app:case-studies}
\begingroup
\nolinenumbers
\raggedbottom
\setlength{\parindent}{0pt}
\setlength{\parskip}{2pt}
\newcommand{\casefont}{\fontsize{9.5}{12}\selectfont}
\newcommand{\caseomitted}{\par\vspace{1pt}{\centering\itshape\fontsize{8}{9}\selectfont [\ldots{} intermediate reasoning omitted \ldots{}]\par}\vspace{1pt}}
\newcommand{\casecheck}[1]{\par\smallskip\textbf{Check.} #1}
\newtcolorbox{caserollout}[2][]{enhanced,colback=green!3,colframe=green!45!black,
 boxrule=0.45pt,arc=1pt,title={#2},coltitle=white,toptitle=4pt,bottomtitle=4pt,
 fonttitle=\fontsize{9}{10}\selectfont\bfseries,fontupper=\casefont,
 left=5pt,right=5pt,top=4pt,bottom=4pt,boxsep=0pt,
 before skip=0pt,after skip=5pt,
 before upper={\setlength{\parskip}{2pt}\setlength{\abovedisplayskip}{3pt}\setlength{\belowdisplayskip}{3pt}\setlength{\abovedisplayshortskip}{2pt}\setlength{\belowdisplayshortskip}{2pt}},#1}
\tcbset{case grid/.style={raster columns=2,raster equal height=rows,
 raster column skip=8pt,raster row skip=6pt,
 raster before skip=6pt,raster after skip=0pt}}
\lstset{basicstyle=\ttfamily\fontsize{7.5}{8.5}\selectfont,breaklines=true,
 columns=fullflexible,keepspaces=true,showstringspaces=false,
 aboveskip=3pt,belowskip=3pt}

\subsection{DeepSeek-R1-Distill-Qwen-1.5B: Mathematical Reasoning}
\label{app:case-1p5b-math}
{\fontsize{9}{10.5}\selectfont
\textbf{Question (MATH-500).} A steel sphere with a 3-inch radius is made
by removing metal from the corners of a cube that has the shortest possible
side lengths. How many cubic inches are in the volume of the cube?
\par
\textbf{Reference answer:} $216$.\par}
\begin{tcbraster}[case grid]
\begin{caserollout}{Initial Student (Incorrect)}
The response states that an inscribed sphere ``touches the corners of the
cube,'' and equates its diameter to the cube's space diagonal:
\[
s\sqrt{3}=6,\qquad s=2\sqrt{3}.
\]
\caseomitted
It cubes this edge length:
\[
(2\sqrt{3})^3=24\sqrt{3}.
\]
\textbf{Model answer:} $24\sqrt{3}$.
\casecheck{The sphere must fit inside the cube. Its diameter equals the
shortest cube's edge, rather than its space diagonal. The proposed edge
$2\sqrt{3}<6$ cannot contain the sphere.}
\end{caserollout}
\begin{caserollout}{Fixed (Incorrect)}
The response models the removal of corners using a truncation plane
$x+y+z=3\sqrt{3}$. It then sets a corner-to-plane distance equal to the
sphere's radius:
\[
\left|\frac{3s/2-3\sqrt{3}}{\sqrt{3}}\right|=3.
\]
\caseomitted
It selects $s=4\sqrt{3}$ and computes
\[
(4\sqrt{3})^3=192\sqrt{3}.
\]
\textbf{Model answer:} $192\sqrt{3}$.
\casecheck{The added corner-to-plane constraint does not determine the
smallest enclosing cube. The resulting edge is larger than the required
diameter $6$.}
\end{caserollout}
\begin{caserollout}{\Dthree (Incorrect)}
The response assumes a truncation plane $x+y+z=a$ and uses the distance
from the cube's center $(a/2,a/2,a/2)$ to that plane:
\[
\frac{|3a/2-a|}{\sqrt{3}}=\frac{a}{2\sqrt{3}}=3
\quad\Longrightarrow\quad a=6\sqrt{3}.
\]
\caseomitted
It evaluates $(6\sqrt{3})^3=648\sqrt{3}$.
\par
\textbf{Model answer:} $648\sqrt{3}$.
\casecheck{This assumed plane imposes an unsupported geometric condition.
The shortest enclosing cube is determined by tangency to its six faces,
so its edge is $6$, not $6\sqrt{3}$.}
\end{caserollout}
\begin{caserollout}[colback=red!3,colframe=red!65!black]{\NashMOPD (Correct)}
The response identifies the relevant tangency: ``The sphere is inscribed
inside the cube such that it touches the centers of the faces of the cube.''
It therefore equates the cube's edge to the sphere's diameter:
\[
s=2\times3=6.
\]
\caseomitted
It answers the requested volume, rather than a diagonal or the sphere's volume:
$s^3=6^3=216$.
\par
\textbf{Model answer:} $216$.
\casecheck{The radius-$3$ sphere fits inside a cube of edge $6$ and touches
all six faces. Any smaller edge would exclude part of the sphere.}
\end{caserollout}
\end{tcbraster}

\clearpage
\renewcommand{\casefont}{\fontsize{9}{11}\selectfont}
\subsection{DeepSeek-R1-Distill-Qwen-1.5B: Code Generation}
\label{app:case-1p5b-code}
{\fontsize{9}{10.5}\selectfont
\textbf{Question (LiveCodeBench v6; condensed).} Given an uppercase string
$S$ of length $3$--$100$, count triples $1\leq i<j<k\leq |S|$ such that
$j-i=k-j$ and $S_i=\texttt{A}$, $S_j=\texttt{B}$, $S_k=\texttt{C}$.
Read $S$ from standard input and print the count.
For \texttt{AABCC}, the answer is $2$; for
\texttt{AABAAABBAEDCCCD}, it is $4$.
\par
\textbf{Reference criterion:} pass all 43 saved tests.\par}
\begin{tcbraster}[case grid]
\begin{caserollout}{Initial Student (Incorrect)}
\textit{[Reasoning and explanatory prose omitted.]}
\begin{lstlisting}
S = input().strip()
count = 0
n = len(S)
for d in range(1, n - 1):
    for j in range(1, n - d + 1):
        i = j - d
        k = j + d
        if S[i] == 'A' and S[j] == 'B' and S[k] == 'C':
            count += 1
print(count)
\end{lstlisting}
\textbf{Recorded verdict:} fail; 38/43 tests pass in the local recheck.
\casecheck{The bounds allow $k=n$, outside Python's valid indices.
\texttt{AABAAABBAEDCCCD} raises \texttt{IndexError} instead of printing $4$.}
\end{caserollout}
\begin{caserollout}{Fixed (Incorrect)}
The response repeatedly revisits the indexing and attempts to count the
example \texttt{AABCC}. One pass gives $3$ instead of the reference $2$.
It notices this mismatch: ``So perhaps my approach is still wrong.''
\caseomitted
The saved response ends while reconsidering the loop bounds:
\begin{quote}
``So for len(S)=5, j runs from 1 to4.\par
In j=4:\par
i can''
\end{quote}
\textbf{Recorded verdict:} fail; no complete code block.
\casecheck{The saved generation contains no complete delimited Python
program. It is an unfinished response, so no runnable solution can be
submitted under the benchmark's extraction rule.}
\end{caserollout}
\begin{caserollout}{\Dthree (Incorrect)}

\textit{[Reasoning and explanatory prose omitted.]}
\begin{lstlisting}
s = input().strip()
n = len(s)
count = 0
for j in range(n):
    max_d = n - j - 1
    for d in range(1, max_d + 1):
        i = j - d
        k = j + d
        if s[i] == 'A' and s[j] == 'B' and s[k] == 'C':
            count += 1
print(count)
\end{lstlisting}
\par\smallskip

\textbf{Recorded verdict:} fail; 42/43 tests pass in the local recheck.
\casecheck{The upper bound constrains only the right endpoint.
For $d>j$, $i=j-d$ is negative and Python reads from the end of the string.
The saved test \texttt{AABAAABBAEDCCCD} produces $5$ rather than $4$.}

\end{caserollout}
\begin{caserollout}[colback=red!3,colframe=red!65!black]{\NashMOPD (Correct)}

\textit{[Reasoning and explanatory prose omitted.]}
\begin{lstlisting}
s = input().strip()
count = 0
n = len(s)
for j in range(2, n):
    max_d = min(j - 1, n - j)
    for d in range(1, max_d + 1):
        i = j - d
        k = j + d
        if i < 1 or k > n:
            continue
        if s[i-1] == 'A' and s[j-1] == 'B' and s[k-1] == 'C':
            count += 1
print(count)
\end{lstlisting}
\par\smallskip

\textbf{Recorded verdict:} pass; 43/43 tests pass in the local recheck.
\casecheck{The response uses one-based mathematical indices consistently
and converts them to Python indices at access time.
The bound $d\leq\min(j-1,n-j)$ keeps both endpoints inside the string.
It prints $4$ on the same counterexample.}

\end{caserollout}
\end{tcbraster}

\clearpage
\renewcommand{\casefont}{\fontsize{9.5}{12}\selectfont}
\subsection{DeepSeek-R1-Distill-Qwen-1.5B: Instruction Following}
\label{app:case-1p5b-if}
{\fontsize{9}{10.5}\selectfont
\textbf{Question (IFBench; original English prompt).}
Generate 4 multiple choice questions with 5 options each about ``20th century art history''.
Each question should start with the label ``Question''. The questions should get
progressively longer. Do not provide an explanation.
\par
\textbf{Reference criterion:} the official
\texttt{custom:mcq\_count\_length} check. Length is the character count of
question text before its options.\par}
\begin{tcbraster}[case grid]
\begin{caserollout}{Initial Student (Incorrect)}
\textit{[Reasoning and the five options per question omitted; all question texts retained.]}
\par\smallskip
\textbf{Question 1:} The Great Depression and World War II had a significant impact on 20th century art. Which of the following best describes this impact?
\par\smallskip
\textbf{Question 2:} The 1950s and 1960s were a time of digital art and computer use. Which movement emerged during this period?
\par\smallskip
\textbf{Question 3:} The 1970s and 1980s were a time of experimentation and new art movements. Which movement emerged during this period?
\par\smallskip
\textbf{Question 4:} The 1990s and 2000s brought about the digital revolution and the rise of digital art. Which of the following best describes this period?
\par\smallskip
\textbf{Recorded verdict:} strict fail; loose fail.
\casecheck{The question lengths are $134,107,116,136$ characters. The second question is shorter than the first, so the required increasing-length sequence fails.}
\end{caserollout}
\begin{caserollout}{Fixed (Incorrect)}
\textit{[Reasoning and the five options per question omitted; all question texts retained.]}
\par\smallskip
\textbf{Question 1:} Which of the following best describes the origins of 20th century art?
\par\smallskip
\textbf{Question 2:} What was a key characteristic of Expressionism in 20th century art?
\par\smallskip
\textbf{Question 3:} How did the Great Depression and World War II shape 20th century art?
\par\smallskip
\textbf{Question 4:} What is a major concept in 20th century art history?
\par\smallskip
\textbf{Recorded verdict:} strict fail; loose fail.
\casecheck{The question lengths are $70,67,69,52$ characters. They decrease at the first transition and again at the last transition.}
\end{caserollout}
\begin{caserollout}{\Dthree (Incorrect)}
\textit{[Reasoning and the five options per question omitted; all question texts retained.]}
\par\smallskip
\textbf{Question} Which artist is best known for creating abstract geometric patterns in the 20th century?
\par\smallskip
\textbf{Question} Which period in the 20th century saw the most significant contributions to modern art?
\par\smallskip
\textbf{Question} Which of the following artists is associated with the use of color in their work during the 20th century?
\par\smallskip
\textbf{Question} Which of the following movements was most influential in shaping 20th century art?
\par\smallskip
\textbf{Recorded verdict:} strict fail; loose fail.
\casecheck{The question lengths are $88,86,105,82$ characters and are not strictly increasing. It also omits question numbers expected by the official parser.}
\end{caserollout}
\begin{caserollout}[colback=red!3,colframe=red!65!black]{\NashMOPD (Correct)}
\textit{[Reasoning and the five options per question omitted; all question texts retained.]}
\par\smallskip
\textbf{Question 1:} What was the first major movement in 20th century art history?
\par\smallskip
\textbf{Question 2:} Which of the following is a key figure in the Realism movement?
\par\smallskip
\textbf{Question 3:} What theory emerged in the 20th century that influenced modern art?
\par\smallskip
\textbf{Question 4:} Which of the following is a significant theory in 20th century art history?
\par\smallskip
\textbf{Recorded verdict:} strict pass; loose pass.
\casecheck{The question lengths are $62,63,67,75$ characters, so all three transitions increase. The original response has four numbered questions, each with five options, and no explanatory prose. This verdict checks the requested format and lengths, rather than the factual quality of the quiz.}
\end{caserollout}
\end{tcbraster}

\clearpage
\subsection{Qwen3-8B-Base: Mathematical Reasoning}
\label{app:case-8b-math}
{\fontsize{9}{10.5}\selectfont
\textbf{Question (MATH-500).} How many of the same digits are found in
the base 7 and base 8 representations of $629_{10}$? For example,
$121_3$ and $413_5$ would have one digit in common.
\par
\textbf{Reference answer:} $3$.\par}
\begin{tcbraster}[case grid]
\begin{caserollout}{Initial Student (Incorrect)}
The response describes repeated division and supplies a conversion
function, then claims the following output:
\begin{lstlisting}
([1, 6, 0, 2], [1, 1, 4, 5], {1}, 1)
\end{lstlisting}
\caseomitted
It concludes that the representations are $1602_7$ and $1145_8$,
and that only the digit $1$ appears in both.
\par
\textbf{Model answer:} $1$.
\casecheck{The claimed conversions are incorrect:
$1602_7=639$ and $1145_8=613$.
The actual representations are $1556_7$ and $1165_8$.}
\end{caserollout}
\begin{caserollout}{Fixed (Incorrect)}
The response correctly obtains $1556_7$ and $1165_8$, but compares
every occurrence of a digit in one representation with every occurrence
in the other. It lists five matches:
\begin{quote}
``1 vs 1'' (twice), ``5 vs 5'' (twice), and ``6 vs 6''.
\end{quote}
\caseomitted
It decides that matches at different positions should count separately.
\par
\textbf{Model answer:} $5$.
\casecheck{The question counts distinct shared digits, as its example
indicates. Repeated occurrences of $1$ and $5$ must not increase the count.}
\end{caserollout}
\begin{caserollout}{\Dthree (Incorrect)}
The response also correctly converts the number to $1556_7$ and $1165_8$.
It aligns their positions and compares $1$ with $1$, $5$ with $1$,
$5$ with $6$, and $6$ with $5$.
\caseomitted
It concludes: ``Only the first digit is the same.''
\par
\textbf{Model answer:} $1$.
\casecheck{Shared digits need not occur in the same position.
Digits $5$ and $6$ occur in both representations, so positional comparison
misses two valid shared digits.}
\end{caserollout}
\begin{caserollout}[colback=red!3,colframe=red!65!black]{\NashMOPD (Correct)}
The response computes $629_{10}=1556_7=1165_8$.
It initially considers a positional comparison, then revisits the meaning:
``Alternatively, perhaps I should consider that the digits are the same
regardless of their positions, just looking for any matching digits.''
\caseomitted
It identifies $1$, $5$, and $6$ in both representations, and explicitly
counts each digit only once despite repeated occurrences.
\par
\textbf{Model answer:} $3$.
\casecheck{The distinct digit sets are both $\{1,5,6\}$.
Their intersection has size $3$, matching the question's example.}
\end{caserollout}
\end{tcbraster}

\clearpage
\renewcommand{\casefont}{\fontsize{9}{11}\selectfont}
\subsection{Qwen3-8B-Base: Code Generation}
\label{app:case-8b-code}
{\fontsize{9}{10.5}\selectfont
\textbf{Question (LiveCodeBench v6; condensed).}
Implement \texttt{Solution.largestInteger(nums, k)}.
An integer is ``almost missing'' if it appears in exactly one contiguous
subarray of size $k$. Return the largest such integer, or $-1$ if none
exists. Constraints: $1\leq |\texttt{nums}|\leq50$,
$0\leq\texttt{nums}[i]\leq50$, and $1\leq k\leq|\texttt{nums}|$.
\par
\textbf{Reference criterion:} pass all 34 saved tests.
Repeated occurrences inside one subarray count as one containing subarray.\par}
\begin{tcbraster}[case grid]
\begin{caserollout}{Initial Student (Incorrect)}
Its first code block accumulates each occurrence:
\textit{[Surrounding code and prose omitted.]}
\begin{lstlisting}
for subarray in subarrays:
    for num in subarray:
        count[num] += 1
\end{lstlisting}
The last code block contains only example calls:
\begin{lstlisting}
solution = Solution()
print(solution.largestInteger([3, 9, 2, 1, 7], 3))  # Output: 7
print(solution.largestInteger([3, 9, 7, 2, 1, 7], 4))  # Output: 3
print(solution.largestInteger([0, 0], 1))  # Output: -1
\end{lstlisting}
\textbf{Recorded verdict:} fail; 0/34 tests pass in the local recheck.
\casecheck{The benchmark extracts the last complete code block,
which has no class definition and raises \texttt{NameError}.
The earlier implementation also counts repeated elements within one
subarray multiple times.}
\end{caserollout}
\begin{caserollout}{Fixed (Incorrect)}
\textit{[Imports, signature, comments and prose omitted.]}
\begin{lstlisting}
subarrays = []
for i in range(len(nums) - k + 1):
    subarrays.append(nums[i:i + k])
from collections import defaultdict
count_dict = defaultdict(int)
for subarray in subarrays:
    for num in subarray:
        count_dict[num] += 1
largest_integer = -1
for num in sorted(count_dict.keys(), reverse=True):
    if count_dict[num] == 1:
        largest_integer = num
        break
return largest_integer
\end{lstlisting}
\textbf{Recorded verdict:} fail; 30/34 tests pass in the local recheck.
\casecheck{For \texttt{nums = [0] * 50} and $k=50$, there is only one
subarray. The correct answer is $0$, but the code counts 50 occurrences
and returns $-1$.}
\end{caserollout}
\begin{caserollout}{\Dthree (Incorrect)}

\textit{[Imports, signature, comments and prose omitted.]}
\begin{lstlisting}
subarrays = []
for i in range(len(nums) - k + 1):
    subarrays.append(nums[i:i + k])
count_dict = {}
for num in nums:
    count_dict[num] = 0
for subarray in subarrays:
    for num in subarray:
        if num in count_dict:
            count_dict[num] += 1
max_num = -1
for num, count in count_dict.items():
    if count == 1 and num > max_num:
        max_num = num
return max_num
\end{lstlisting}
\par\smallskip

\textbf{Recorded verdict:} fail; 30/34 tests pass in the local recheck.
\casecheck{The membership check tests whether a dictionary entry exists;
it does not deduplicate elements within a subarray.
Thus the same saved counterexample, \texttt{nums = [0] * 50}, $k=50$,
returns $-1$ instead of $0$.}

\end{caserollout}
\begin{caserollout}[colback=red!3,colframe=red!65!black]{\NashMOPD (Correct)}

\textit{[Imports, signature, comments and prose omitted.]}
\begin{lstlisting}
subarrays = []
for i in range(len(nums) - k + 1):
    subarrays.append(nums[i:i + k])
count_dict = {}
for num in nums:
    count = 0
    for subarray in subarrays:
        if num in subarray:
            count += 1
    count_dict[num] = count
result = -1
for num, count in count_dict.items():
    if count == 1 and num > result:
        result = num
return result
\end{lstlisting}
\par\smallskip

\textbf{Recorded verdict:} pass; 34/34 tests pass in the local recheck.
\casecheck{Each subarray contributes at most one count for a given
integer via \texttt{num in subarray}.
It returns $0$ for the all-zero counterexample, because $0$ belongs to
exactly one size-50 subarray. It then selects the largest integer whose
containing-subarray count equals one.}

\end{caserollout}
\end{tcbraster}
\endgroup


\begin{thebibliography}{30}
\providecommand{\natexlab}[1]{#1}

\bibitem[{Agarwal et~al.(2024)Agarwal, Vieillard, Zhou, Stanczyk, Ramos~Garea,
  Geist, and Bachem}]{agarwal2024gkd}
Rishabh Agarwal, Nino Vieillard, Yongchao Zhou, Piotr Stanczyk, Sabela
  Ramos~Garea, Matthieu Geist, and Olivier Bachem. 2024.
\newblock \href
  {https://proceedings.iclr.cc/paper_files/paper/2024/hash/5be69a584901a26c521c2b51e40a4c20-Abstract-Conference.html}
  {On-policy distillation of language models: Learning from self-generated
  mistakes}.
\newblock In \emph{International Conference on Learning Representations}.

\bibitem[{Austin et~al.(2021)Austin, Odena, Nye, Bosma, Michalewski, Dohan,
  Jiang, Cai, Terry, Le, and Sutton}]{austin2021programsynthesis}
Jacob Austin, Augustus Odena, Maxwell Nye, Maarten Bosma, Henryk Michalewski,
  David Dohan, Ellen Jiang, Carrie Cai, Michael Terry, Quoc Le, and Charles
  Sutton. 2021.
\newblock \href {https://arxiv.org/abs/2108.07732} {Program synthesis with
  large language models}.
\newblock \emph{arXiv preprint arXiv:2108.07732}.

\bibitem[{Chen et~al.(2021)Chen, Tworek, Jun et~al.}]{chen2021evaluatingcode}
Mark Chen, Jerry Tworek, Heewoo Jun, et~al. 2021.
\newblock \href {https://doi.org/10.48550/arXiv.2107.03374} {Evaluating large
  language models trained on code}.
\newblock \emph{arXiv preprint arXiv:2107.03374}.

\bibitem[{Chen et~al.(2018)Chen, Badrinarayanan, Lee, and
  Rabinovich}]{chen2018gradnorm}
Zhao Chen, Vijay Badrinarayanan, Chen-Yu Lee, and Andrew Rabinovich. 2018.
\newblock \href {https://proceedings.mlr.press/v80/chen18a.html} {{GradNorm}:
  Gradient normalization for adaptive loss balancing in deep multitask
  networks}.
\newblock In \emph{Proceedings of the 35th International Conference on Machine
  Learning}, volume~80 of \emph{Proceedings of Machine Learning Research}.
  PMLR.

\bibitem[{{DeepSeek-AI} et~al.(2026)}]{deepseek2026v4}
{DeepSeek-AI} et~al. 2026.
\newblock \href {https://arxiv.org/abs/2606.19348} {{DeepSeek-V4}: Towards
  highly efficient million-token context intelligence}.
\newblock \emph{arXiv preprint arXiv:2606.19348}.

\bibitem[{Feng et~al.(2025)Feng, Xue, Liu, and An}]{feng2025gigpo}
Lang Feng, Zhenghai Xue, Tingcong Liu, and Bo~An. 2025.
\newblock \href {https://doi.org/10.52202/085713-1544} {Group-in-group policy
  optimization for {LLM} agent training}.
\newblock In \emph{Advances in Neural Information Processing Systems},
  volume~38.

\bibitem[{Fu et~al.(2026)Fu, He, Zuo, Huang, Zhang, Xiao, Qian, Luo, Gao, Wang,
  Liu, Ding, and Xiao}]{fu2026oneexampleopd}
Zixuan Fu, Bingxiang He, Yuxin Zuo, Haohuan Huang, Jinqian Zhang, Ruhang Xiao,
  Cheng Qian, Qinyu Luo, Huan-ang Gao, Yudong Wang, Zhiyuan Liu, Ning Ding, and
  Chaojun Xiao. 2026.
\newblock \href {https://arxiv.org/abs/2609.04172} {Rethinking on-policy
  distillation of large language models {II}: One training example}.
\newblock \emph{arXiv preprint arXiv:2609.04172}.

\bibitem[{Gu et~al.(2024)Gu, Dong, Wei, and Huang}]{gu2024minillm}
Yuxian Gu, Li~Dong, Furu Wei, and Minlie Huang. 2024.
\newblock \href
  {https://proceedings.iclr.cc/paper_files/paper/2024/hash/8ac015d409635f196f9e3e9dcfb9a94e-Abstract-Conference.html}
  {{MiniLLM}: Knowledge distillation of large language models}.
\newblock In \emph{International Conference on Learning Representations}.

\bibitem[{Guo et~al.(2025)Guo, Yang, Zhang et~al.}]{deepseek2025r1}
Daya Guo, Dejian Yang, Haowei Zhang, et~al. 2025.
\newblock \href {https://doi.org/10.1038/s41586-025-09422-z} {{DeepSeek-R1}
  incentivizes reasoning in {LLMs} through reinforcement learning}.
\newblock \emph{Nature}, 645(8081):633--638.

\bibitem[{Guo et~al.(2018)Guo, Haque, Huang, Yeung, and
  Fei-Fei}]{guo2018dynamic}
Michelle Guo, Albert Haque, De-An Huang, Serena Yeung, and Li~Fei-Fei. 2018.
\newblock \href
  {https://openaccess.thecvf.com/content_ECCV_2018/html/Michelle_Guo_Focus_on_the_ECCV_2018_paper.html}
  {Dynamic task prioritization for multitask learning}.
\newblock In \emph{Proceedings of the European Conference on Computer Vision}.

\bibitem[{He et~al.(2026)He, Qu, Liu, Chen, Zuo, Qian, Zhang, Chen, Xiao, Cui,
  Ding, and Liu}]{he2025justrl}
Bingxiang He, Zekai Qu, Zeyuan Liu, Yinghao Chen, Yuxin Zuo, Cheng Qian, Kaiyan
  Zhang, Weize Chen, Chaojun Xiao, Ganqu Cui, Ning Ding, and Zhiyuan Liu. 2026.
\newblock \href {https://iclr-blogposts.github.io/2026/blog/2026/justrl/}
  {{JustRL}: Scaling a 1.5b {LLM} with a simple {RL} recipe}.
\newblock In \emph{{ICLR} Blogposts 2026}.

\bibitem[{He et~al.(2024)He, Jin, Wang, Bi, Mandyam, Zhang, Zhu, Li, Xu, Lv,
  Bhosale, Zhu, Sankararaman, Helenowski, Kambadur, Tayade, Ma, Fang, and
  Wang}]{he2024multiif}
Yun He, Di~Jin, Chaoqi Wang, Chloe Bi, Karishma Mandyam, Hejia Zhang, Chen Zhu,
  Ning Li, Tengyu Xu, Hongjiang Lv, Shruti Bhosale, Chenguang Zhu,
  Karthik~Abinav Sankararaman, Eryk Helenowski, Melanie Kambadur, Aditya
  Tayade, Hao Ma, Han Fang, and Sinong Wang. 2024.
\newblock \href {https://arxiv.org/abs/2410.15553} {{Multi-IF}: Benchmarking
  {LLM}s on multi-turn and multilingual instructions following}.
\newblock \emph{arXiv preprint arXiv:2410.15553}.

\bibitem[{Hendrycks et~al.(2021)Hendrycks, Burns, Kadavath, Arora, Basart,
  Tang, Song, and Steinhardt}]{hendrycks2021math}
Dan Hendrycks, Collin Burns, Saurav Kadavath, Akul Arora, Steven Basart, Eric
  Tang, Dawn Song, and Jacob Steinhardt. 2021.
\newblock \href
  {https://datasets-benchmarks-proceedings.neurips.cc/paper/2021/hash/be83ab3ecd0db773eb2dc1b0a17836a1-Abstract-round2.html}
  {Measuring mathematical problem solving with the {MATH} dataset}.
\newblock In \emph{Proceedings of the Neural Information Processing Systems
  Track on Datasets and Benchmarks 1}.

\bibitem[{Jain et~al.(2025)Jain, Han, Gu, Li, Yan, Zhang, Wang, Solar-Lezama,
  Sen, and Stoica}]{jain2025livecodebench}
Naman Jain, King Han, Alex Gu, Wen-Ding Li, Fanjia Yan, Tianjun Zhang, Sida
  Wang, Armando Solar-Lezama, Koushik Sen, and Ion Stoica. 2025.
\newblock \href
  {https://proceedings.iclr.cc/paper_files/paper/2025/hash/94074dd5a072d28ff75a76dabed43767-Abstract-Conference.html}
  {{LiveCodeBench}: Holistic and contamination free evaluation of large
  language models for code}.
\newblock In \emph{International Conference on Learning Representations}.

\bibitem[{{Kimi Team}(2026)}]{kimi2026k3}
{Kimi Team}. 2026.
\newblock \href {https://arxiv.org/abs/2607.24653} {{Kimi K3}: Open frontier
  intelligence}.
\newblock \emph{arXiv preprint arXiv:2607.24653}.

\bibitem[{Li et~al.(2023)Li, Fu, Zhang, Huang, Sun, Lyu, Liu, Jin, and
  Li}]{li2023taco}
Rongao Li, Jie Fu, Bo-Wen Zhang, Tao Huang, Zhihong Sun, Chen Lyu, Guang Liu,
  Zhi Jin, and Ge~Li. 2023.
\newblock \href {https://arxiv.org/abs/2312.14852} {{TACO}: Topics in
  algorithmic {CO}de generation dataset}.
\newblock \emph{arXiv preprint arXiv:2312.14852}.

\bibitem[{Liu et~al.(2023)Liu, Xia, Wang, and Zhang}]{liu2023evalplus}
Jiawei Liu, Chunqiu~Steven Xia, Yuyao Wang, and Lingming Zhang. 2023.
\newblock \href {https://openreview.net/forum?id=1qvx610Cu7} {Is your code
  generated by {ChatGPT} really correct? rigorous evaluation of large language
  models for code generation}.
\newblock In \emph{Advances in Neural Information Processing Systems}.

\bibitem[{{LLM-Core Xiaomi}(2026)}]{xiaomi2026mimov26}
{LLM-Core Xiaomi}. 2026.
\newblock \href
  {https://huggingface.co/XiaomiMiMo/MiMo-V2.6-Pro-RL/blob/main/MiMo_V2_6_technical_report.pdf}
  {{MiMo-V2.6}: Scaling reinforcement learning towards self-improvement}.
\newblock Technical report, Xiaomi.

\bibitem[{Ma et~al.(2026)Ma, Wei, Zhao, Zhang, Xiao, Li, Yang, Gao, Wang, Li,
  Dong, Sui, and Luo}]{ma2026mopd}
Wenhan Ma, Jianyu Wei, Liang Zhao, Hailin Zhang, Bangjun Xiao, Lei Li, Qibin
  Yang, Bofei Gao, Yudong Wang, Rang Li, Jinhao Dong, Zhifang Sui, and Fuli
  Luo. 2026.
\newblock \href {https://arxiv.org/abs/2606.30406} {{MOPD}: Multi-teacher
  on-policy distillation for capability integration in {LLM} post-training}.
\newblock \emph{arXiv preprint arXiv:2606.30406}.

\bibitem[{Meng et~al.(2026)Meng, Gan, Li, Wang, Ding, Gou, Wu, and
  Zhao}]{meng2026unbiasedtopk}
Linjian Meng, Siyuan Gan, YuHan Li, Xiran Wang, Ziyang Ding, Ditang Gou, Yiming
  Wu, and Zhen Zhao. 2026.
\newblock \href {https://arxiv.org/abs/2609.34447} {Unbiased top-$k$ estimation
  for on-policy distillation}.
\newblock \emph{arXiv preprint arXiv:2609.34447}.

\bibitem[{Munos et~al.(2024)Munos, Valko, Calandriello, Gheshlaghi~Azar,
  Rowland, Guo, Tang, Geist, Mesnard, Fiegel, Michi, Selvi, Girgin, Momchev,
  Bachem, Mankowitz, Precup, and Piot}]{munos2024nash}
R{\'e}mi Munos, Michal Valko, Daniele Calandriello, Mohammad Gheshlaghi~Azar,
  Mark Rowland, Zhaohan~Daniel Guo, Yunhao Tang, Matthieu Geist, Thomas
  Mesnard, C{\^o}me Fiegel, Andrea Michi, Marco Selvi, Sertan Girgin, Nikola
  Momchev, Olivier Bachem, Daniel~J. Mankowitz, Doina Precup, and Bilal Piot.
  2024.
\newblock \href {https://proceedings.mlr.press/v235/munos24a.html} {{Nash
  Learning from Human Feedback}}.
\newblock In \emph{Proceedings of the 41st International Conference on Machine
  Learning}, volume 235 of \emph{Proceedings of Machine Learning Research}.
  PMLR.

\bibitem[{Navon et~al.(2022)Navon, Shamsian, Achituve, Maron, Kawaguchi,
  Chechik, and Fetaya}]{navon2022bargaining}
Aviv Navon, Aviv Shamsian, Idan Achituve, Haggai Maron, Kenji Kawaguchi, Gal
  Chechik, and Ethan Fetaya. 2022.
\newblock \href {https://proceedings.mlr.press/v162/navon22a.html} {Multi-task
  learning as a bargaining game}.
\newblock In \emph{Proceedings of the 39th International Conference on Machine
  Learning}, volume 162 of \emph{Proceedings of Machine Learning Research}.
  PMLR.

\bibitem[{Pyatkin et~al.(2025)Pyatkin, Malik, Graf, Ivison, Huang, Dasigi,
  Lambert, and Hajishirzi}]{pyatkin2025ifbench}
Valentina Pyatkin, Saumya Malik, Victoria Graf, Hamish Ivison, Shengyi Huang,
  Pradeep Dasigi, Nathan Lambert, and Hanna Hajishirzi. 2025.
\newblock \href {https://doi.org/10.52202/085713-1645} {Generalizing verifiable
  instruction following}.
\newblock In \emph{Advances in Neural Information Processing Systems 38,
  Datasets and Benchmarks Track}.

\bibitem[{Sener and Koltun(2018)}]{sener2018mto}
Ozan Sener and Vladlen Koltun. 2018.
\newblock \href
  {https://proceedings.neurips.cc/paper/2018/hash/432aca3a1e345e339f35a30c8f65edce-Abstract.html}
  {Multi-task learning as multi-objective optimization}.
\newblock In \emph{Advances in Neural Information Processing Systems},
  volume~31.

\bibitem[{Sun et~al.(2026)Sun, Zhang, Zhao, Li, Chuan, Deng, Zhan, Chen, Hu,
  and Zhang}]{sun2026d3mopd}
Zechen Sun, Zhiwei Zhang, Fei Zhao, Juntao Li, Mu~Chuan, Huayu Deng, Guojian
  Zhan, Wenliang Chen, Yao Hu, and Min Zhang. 2026.
\newblock \href {https://arxiv.org/abs/2608.24987} {{D$^3$-MOPD}: Dynamic
  domain scheduling for efficient multi-teacher distillation}.
\newblock \emph{arXiv preprint arXiv:2608.24987}.

\bibitem[{Sundararaman et~al.(2021)Sundararaman, Tsai, Lee, Turc, and
  Carin}]{sundararaman2021sampling}
Dhanasekar Sundararaman, Henry Tsai, Kuang-Huei Lee, Iulia Turc, and Lawrence
  Carin. 2021.
\newblock \href {https://doi.org/10.18653/v1/2021.findings-emnlp.375} {Learning
  task sampling policy for multitask learning}.
\newblock In \emph{Findings of the Association for Computational Linguistics:
  EMNLP 2021}. Association for Computational Linguistics.

\bibitem[{Xie et~al.(2023)Xie, Pham, Dong, Du, Liu, Lu, Liang, Le, Ma, and
  Yu}]{xie2023doremi}
Sang~Michael Xie, Hieu Pham, Xuanyi Dong, Nan Du, Hanxiao Liu, Yifeng Lu, Percy
  Liang, Quoc~V. Le, Tengyu Ma, and Adams~Wei Yu. 2023.
\newblock \href {https://doi.org/10.52202/075280-3059} {{DoReMi}: Optimizing
  data mixtures speeds up language model pretraining}.
\newblock In \emph{Advances in Neural Information Processing Systems 36}.

\bibitem[{Yang et~al.(2025)}]{yang2025qwen3}
An~Yang et~al. 2025.
\newblock \href {https://arxiv.org/abs/2505.09388} {{Qwen3} technical report}.
\newblock \emph{arXiv preprint arXiv:2505.09388}.

\bibitem[{Yu et~al.(2025)Yu, Zhang, Zhu, Yuan, Zuo, Yue, Dai, Fan, Liu, Liu,
  Liu, Liu, Lin, Lin, Ma, Sheng, Tong, Zhang, Zhang, Zhang, Zhang, Zhu, Zhu,
  Chen, Chen, Wang, Yu, Song, Wei, Zhou, Liu, Ma, Zhang, Yan, Wu, and
  Wang}]{yu2025dapo}
Qiying Yu, Zheng Zhang, Ruofei Zhu, Yufeng Yuan, Xiaochen Zuo, Yu~Yue, Weinan
  Dai, Tiantian Fan, Gaohong Liu, Juncai Liu, LingJun Liu, Xin Liu, Haibin Lin,
  Zhiqi Lin, Bole Ma, Guangming Sheng, Yuxuan Tong, Chi Zhang, Mofan Zhang, and
  17 others. 2025.
\newblock \href {https://doi.org/10.52202/085713-3775} {{DAPO}: An open-source
  {LLM} reinforcement learning system at scale}.
\newblock In \emph{Advances in Neural Information Processing Systems 38}.

\bibitem[{Zeng et~al.(2026)}]{zeng2026glm5}
Aohan Zeng et~al. 2026.
\newblock \href {https://arxiv.org/abs/2602.15763} {{GLM-5}: from vibe coding
  to agentic engineering}.
\newblock \emph{arXiv preprint arXiv:2602.15763}.

\end{thebibliography}
\end{document}